\newcommand{\markupdraft}[2]{
    \ifthenelse{\equal{#1}{display}}{#2}{}
    \ifthenelse{\equal{#1}{color}}{\color{#2}}{}
}
\newcommand{\notecolored}[3][]{\markupdraft{display}{{\color{#2}\noindent[Note (#1): #3]}}}
\newcommand{\newcolored}[3][]{{\markupdraft{color}{#2}#3}
    \ifthenelse{\equal{#1}{}}{}{\markupdraft{display}{{\color{yellow!70!black}[#1]}}}} 
\newcommand{\new}[2][]{\newcolored[#1]{blue}{#2}}
\newcommand{\nnew}[2][]{\newcolored[#1]{red}{#2}}
\newcommand{\note}[2][]{\notecolored[#1]{green}{#2}}    
\newcommand{\todo}[2][]{\markupdraft{display}{{\color{red}\noindent++TODO: #2 ++}}}
\renewcommand{\markupdraft}[2]{}  
\newcommand{\yohe}[1]{\markupdraft{display}{{\color{magenta}[Note (Youhei): #1]}}}
\newcommand{\yann}[1]{\markupdraft{display}{{\color[rgb]{0.0,0.8,0.4}[NdYann: #1]}}}
\newcommand{\indicator}[1]{\mathds{1}_{\left[ {#1} \right] }}
\newcommand\argmax{\mathop{\arg\,\max}\limits}%
\newcommand{\X}{X} 
\newcommand{\R}{\mathbb{R}} 
\newcommand{\rmd}{\mathrm{d}\hspace{-0.02em}}
\newcommand{\dx}{\rmd{}x} 
\newcommand{\RM}{\dx} 
\newcommand{\np}{\beta} 
\newcommand{\ep}{\eta} 
\newcommand{\E}{\mathbb{E}} 
\newcommand{\T}{\mathrm{T}} 
\newcommand{\qm}{q_\theta^<} 
\newcommand{\qp}{q_\theta^\leq} 
\newcommand{\Wf}{W^f_\theta} 
\newcommand{\qmt}{q_{\theta^t}^<} 
\newcommand{\Wft}{W^f_{\theta^t}} 
\newcommand{\Jtwo}[2]{J(#1\,|\,#2)} 
\newcommand{\Jt}[1]{J(#1\,|\,\theta^t)} 
\newcommand{\Jtt}[1]{J(#1\,|\,\theta^{t+1})} 
\newcommand{\FM}{\mathcal{I}} 
\newcommand{\tnabla}{\tilde \nabla} 
\newcommand{\deltat}{\ensuremath{\delta\hspace{-.06em}t\hspace{0.05em}}}
\newcommand{\KL}[2]{D_\mathrm{KL}(#1 \!\parallel\! #2)} 
\newcommand{\dtheta}{\delta \theta} 
\newcommand{\onehalf}{\frac{1}{2}} 
\newcommand{\norm}[1]{\lVert#1\rVert} 
\newcommand{\rkm}{\mathrm{rk}^{<}} 
\newcommand{\rkp}{\mathrm{rk}^{\leq}} 
\newcommand{\wi}{\widehat w_i} 
\newcommand{\barw}{\bar w} 
\newcommand{\Gt}{G^t} 
\newcommand{\QPq}{Q_{P}^{q}} 
\newcommand{\QPtq}{Q_{P_{\theta^t}}^{q}} 
\newcommand{\QPttq}{Q_{P_{\theta^{t+\deltat}}}^{q}} 
\newcommand{\mmax}{m_\text{max}}
\newcommand{\QPthetaq}{Q_{P_\theta}^q}
\newcommand{\Ht}{H_t}
\newcommand{\thetatt}{\theta^{t+\deltat}}
\newcommand{\deq}{\mathrel{\mathop:}=}
\newtheorem{theorem}{Theorem}
\newtheorem{statement}[theorem]{Statement}
\newtheorem{lemma}[theorem]{Lemma}
\newtheorem{proposition}[theorem]{Proposition}
\newtheorem{corollary}[theorem]{Corollary}
\begin{document}

\conferenceinfo{FOGA'13,} {January 16–20, 2013, Adelaide, Australia.} 
\CopyrightYear{2013} 
\crdata{978-1-4503-1990-4/13/01} 
\clubpenalty=10000 
\widowpenalty = 10000

\title{Objective Improvement in\\
Information-Geometric Optimization}

\numberofauthors{2} 
%
\author{
\alignauthor
Youhei Akimoto\\
       \affaddr{Project TAO -- INRIA Saclay}\\
       \affaddr{LRI, B\^at.~490, Univ.~Paris-Sud}\\
       \affaddr{91405 Orsay, France}\\
       \email{Youhei.Akimoto@lri.fr}
\alignauthor
Yann Ollivier\\
       \affaddr{CNRS \& Univ.~Paris-Sud}\\
       \affaddr{LRI, B\^at.~490}\\
       \affaddr{91405 Orsay, France}\\
       \email{yann.ollivier@lri.fr}
}
\date{14 Aug., 2012}

\maketitle

\yann{I guess the paper is ready then?}
\yohe{Yes. I think so. }

\note[Reviewer's comments]{
\begin{itemize}
 \item Section 3, first sentence: ``justify the framework'' -> theoretically nice property?
 \item The weakness of the paper, I think, is in laying out the purpose of the endeavor.
 \item I think the purpose of the IGO is unclear. Is it meant to propose
 a novel formalism for derivation of exiting algorithms? Or is it
 deriving new algorithms? I think the latter because even example 1,
 which is supposed to be PBIL, there appear to be differences, such as
 with the quartile weighted selection. Given this, it would be nice to
 know if this really works, but no experimental results are given. In any
 event, more discussion of whether this is or is not the case would be
 useful.
\yann{Nothing to be done here, I guess.} 
\yohe{I agree with you. We state in the second paragraph in section 1 
``the IGO framework not only provides information-theoretic derivations for
existing algorithms but automatically offers new algorithms''}
 
 This paper purports to show that IGO gives monotone improvement with an
 infinite population, which is an important result. It then states in a
 brief section at the end that this will hold in a finite population and
 even in a small population \yann{We explicitly say it will not hold with
 a small population}, so there must be some empirical results. Does that
 mean that you can derive a form of, say PBIL, which will have this
 property while the ordinary version does not? Again, emphasize this,
 clarify this, discuss this; it would make the paper stronger. \yann{I
 don't really know what to do here... Of course using PBIL with a very
 large population and the truncation scheme (which is not the same as the
 exponential weights introduced by Baluja) will have this property.}
 \yohe{I have stated in example 3 that the truncation weight is different
 from the one introduced by Baluja. }
\end{itemize}
}

\begin{abstract}
{\em Information-Geometric Optimization} (IGO) is a unified framework
of stochastic algorithms for optimization problems. Given a family of
probability distributions, IGO turns the original optimization
problem into a new maximization problem on the parameter space of the
probability distributions. IGO updates the parameter of the
probability distribution along the natural gradient, taken with
respect to the Fisher metric on the parameter manifold, aiming at
maximizing an adaptive transform of the objective function. IGO
recovers several known algorithms as particular instances: for the family
of Bernoulli distributions IGO recovers PBIL, for the family of
Gaussian distributions the pure rank-$\mu$ CMA-ES update is
recovered, and for exponential families in expectation
parametrization the cross-entropy/ML method is recovered.

This article provides a theoretical justification for the IGO framework,
by proving that any step size not greater than $1$ guarantees monotone
improvement over the course of optimization, in terms of $q$-quantile values of the objective function
$f$. The range of admissible step sizes is independent of $f$ and its
domain. We extend the result to cover the case of different step sizes
for blocks of the parameters in the IGO algorithm. Moreover, we prove
that expected fitness improves over time when fitness-proportional
selection is applied, in which case the RPP algorithm is
recovered.
\yohe{I don't think we need to spell out the full name of CMA-ES, PBIL and RPP in the abstract. (Reviewer's comment)}

\end{abstract}

\category{G.1.6}{Mathematics of Computing}{Numerical
Analysis}[Optimization]

\yann{What are we supposed to put into "General terms"?}
\yohe{When submitting a paper to GECCO, they require to put some words out of 16 keywords defined by ACM (e.g., Theory, Algorithm, ..., there is an instruction page at ACM website.)}
\terms{Theory}

\keywords{Information-Geometric Optimization, Natural Gradient, Quantile
Improvement, Step Size, Black Box Optimization} 

\yohe{I prefer to use ``step size'' for IGO but ``learning rate'' for PBIL or CMA-ES as they are used in their contexts.}

\yohe{As far as I know, there isn't a paper that mentions quantile improvement in CE/ML, although it seems quite natural to consider it when learning rate equals one in CE.}

\section{Introduction}

Information-Geometric Optimization (IGO) \cite{Arnold2011arxiv} is a
unified framework of model based stochastic search algorithms for any
optimization problem. As typified by Estimation of Distribution
Algorithms (EDA) \cite{Larranaga2002book}, model based randomized search
algorithms build a statistical model $P_{\theta}$ on the search space
$\X$ to generate search points. The parameters $\theta$ of the
statistical model are updated over time so that the probability
distribution hopefully concentrates around the minimum of the objective function.
In most model based algorithms such as EDAs and Ant Colony Optimization
(ACO) algorithms \cite{Dorigo1996tsmc}, parameter calibration is based on the
maximum likelihood principle or other intuitive ways. IGO, unlike
them, performs a natural gradient ascent of $\theta$ in the parameter space
$\Theta$, having first adaptively transformed the objective function
into a function on $\Theta$. 
This construction
offers maximal robustness guarantees with respect to changes in the
representation of the problem (change of parametrization of the search
space, of the parameter space, and of the fitness values).

Importantly, the IGO framework recovers several known algorithms
\cite[Section 4]{Arnold2011arxiv}. When
IGO is instantiated using the family of Bernoulli distributions on
$\{0, 1\}^d$, one obtains the
\emph{population based incremental learning} (PBIL) algorithm
\cite{Baluja1995icml}. When using the family of Gaussian distributions on
$\R^d$, IGO instantiates as a variant of \emph{covariance matrix adaptation
evolution strategies} (CMA-ES), the so-called pure rank-$\mu$ CMA-ES
update
\cite{Hansen2003ec}.  Moreover, when using an
exponential family with the expectation parameters, the IGO instance is
equivalent to the cross-entropy method for optimization
\cite{Boer2005aor}. Of course, the IGO framework not only 
provides information-theoretic derivations for
existing algorithms but automatically offers new algorithms for possibly
complicated optimization problems. For instance, the IGO update rule for
the parameters of restricted Boltzmann machines has been derived
\cite{Arnold2011arxiv}. 

Theoretical justification of the IGO framework, therefore, is important
both to provide a theoretical basis for the recovered algorithms and to
make the design principle for future algorithms more reliable.
Here we focus on providing a measure of ``progress'' over the course of
IGO optimization, in terms of quantile values of the objective function.

Parameter updates by gradient ascent are somewhat justified in general,
at least for infinitesimally small steps, because the gradient points
to the direction of steepest ascent of a function.  However, this
argument does not apply to the IGO algorithm: as the objective function is
adaptively transformed 
in a time-dependent way, the function on
which the gradient is computed changes over time, so that its increase does not
necessarily mean global improvement.
Still, the IGO framework comes with a guarantee that an
infinitesimally small IGO step along the natural gradient leads to
monotone improvement of a specified quantity, for any objective function
$f$ \cite[Proposition~5]{Arnold2011arxiv}:
a result from
\cite{Arnold2011arxiv} is that the \emph{$q$-quantile value} of the
objective function monotonically
improves along the natural gradient. This result is limited to
the exact IGO flow, i.e., an infinite number of sample points is
considered and the step size of the gradient ascent is infinitesimal.
Still this ensures that the randomized algorithm with large sample size 
stays close to the deterministic trajectory with infinite samples with
high probability, provided the step size is sufficiently small. Now the
question arises whether actual, non-infinitesimal step sizes still ensure
monotone $q$-quantile improvement. 

\yohe{To reflect a reviewer's comment, I may add comments on why it is important to study a range of appropriate step-size.}

In this article, we prove that \emph{any} step size not greater than $1$
guarantees monotone $q$-quantile value improvement in the IGO algorithm for an
exponential
family with a finite step size (Theorem~\ref{thm:qimp}), thus extending the previous result from
infinitesimal steps with continuous time to
more realistic algorithmic situations.
For instance, this ensures monotone $q$-quantile improvement in PBIL
(using uniform weights, see below), or
in the cross-entropy method for exponential families in expectation
parameters.
Interestingly, our
results show that the admissible step sizes in IGO are
\emph{independent of
the objective function $f$}, at least for large population sizes (this stems from the many invariance
properties built into IGO).

We further extend the result by defining \emph{blockwise} updates in
IGO where different blocks of parameters are adjusted one after another with
different step sizes. Our motivation is that 
in practice the pure
rank-$\mu$ update CMA-ES updates the mean vector and the covariance
matrix with different learning rates. We show that the blockwise update
rule recovers the pure rank-$\mu$ CMA-ES update using different learning
rates for the mean vector and the covariance matrix
(Proposition~\ref{prop:igo-cma}). We prove that
\emph{any} distinct step sizes less than $1$ guarantee monotone
$q$-quantile improvement, which justifies the parameter setting used for
the CMA-ES in practice (Theorem~\ref{thm:iter-qimp}).

Other examples fitting into this framework
 are the Relative Payoff Procedure (\new{RPP}, also
known as expectation-maximization for reinforcement learning)
\cite{Hinton1989ai,Dayan1997nc}, or situations where fitness-proportional
selection is applied using exponential families (Theorem~\ref{thm:fitness-imp}).  The RPP is considered as
an alternative to gradient based methods that allows to use relatively
large learning rates. As it turns out, the RPP can be described as a natural
gradient based algorithm with step size $1$, and our result is an
extension of the proof of its monotone improvement to generic natural
gradient algorithms.

The article is organized as follows. In Section~\ref{sec:igo}, we explain
the IGO framework and its implementation in practice. IGO-maximum
likelihood (IGO-ML), a variant of IGO as a maximum likelihood, is
presented, followed by the relation between the IGO algorithm,
IGO-ML and the cross-entropy method for optimization, for exponential
families of distributions. In Section~\ref{sec:qi}, we prove monotone
$q$-quantile improvement in IGO-ML. The result is extended by
defining blockwise IGO-ML, and $q$-quantile improvement in
blockwise IGO-ML is proved. We also provide a result with finite but
large population sizes. Section~\ref{sec:fit} is devoted to the natural
gradient algorithm with fitness-proportional selection scheme, where
monotone improvement of expected fitness is proven. A short discussion in
Section~\ref{sec:disc} closes the article.

\section{Information-Geometric \\ Optimization}
\label{sec:igo}

In this article, we consider an objective function $f: \X \to \R$ to be minimized over any search space $\X$. The search space $\X$ may be continuous or discrete, finite or infinite. 

Let $\{P_\theta\}$ be a family of probability distributions \new{on $\X$} parametrized
by $\theta \in \Theta$ and let $p_\theta$ be the probability density function
induced by $P_\theta$ w.r.t.\ an arbitrary reference measure $\RM$ on
$\X$, namely, $P_{\theta}(\RM)=p_\theta(x)\RM$. \yann{We could also
drop $P_\theta$ altogether and use only $p_\theta$. Still, measure theory is the correct
framework here...} Given a family of probability distributions, IGO
\cite{Arnold2011arxiv} evolves the
probability distribution $P_{\theta^t}$ at each time $t$ so that higher
probabilities are assigned to better regions. To do so, IGO
transforms the objective function $f(x)$ into a new one $\Wft(x)$,
defines a function on $\Theta$ to be maximized: $\Jt{\theta} \deq
\E_{P_\theta}[\Wft(x)]$, and performs the steepest gradient ascent of
$\Jt{\theta}$ on $\Theta$. Hopefully, after some time the distribution
$P_{\theta^t}$ concentrates around minima of the objective function.

IGO is designed to exhibit as many invariance properties as possible
\cite[Section 2]{Arnold2011arxiv}.
The first property is invariance under strictly increasing
transformations of $f$. For any strictly increasing $g$, IGO
minimizes $g \circ f$ as easily as $f$. This property is realized by a
quantile based mapping of $f$ to $\Wft$ at each time. The second property
is invariance under a change of coordinates in $\X$, provided that this
coordinate change globally preserves the family of probability
distributions $\{P_\theta\}$. For example, the IGO algorithm for Gaussian
distributions on $\R^d$ is invariant under any affine transformation of
the coordinates whereas the IGO algorithm for isotropic Gaussian 
distribution is
only invariant under any translation and rotation. Invariance under
$\X$-coordinate transformation is one of the key properties for the success of the
CMA-ES. The last property is invariance under reparametrization of
$\theta$. At least for infinitesimal steps of the gradient ascent, 
IGO follows the same trajectory on the parameter space whatever the
parametrization for $\theta$ is. This property is obtained by
considering the intrinsic (Fisher) metric on the parameter space $\Theta$
and defining the steepest ascent on $\Theta$ w.r.t.\ this
metric, i.e., by using a natural gradient.

The study of the intrinsic metric on the parameter space of the
probability distribution, called a {\em statistical manifold}, is the
main topic of {\em information geometry} \cite{Amari2000book}. The most
widely used
divergence between two points on the space of probability distributions is the Kullback--Leibler divergence (KL divergence)
\begin{equation*}
 \KL{P_\theta}{P_{\theta'}} \deq \int \ln
 \frac{p_{\theta}(x)}{p_{\theta'}(x)} \,P_\theta(\dx) \enspace.
\end{equation*}
The KL divergence is, by definition, independent of the parametrization
$\theta$. Let $\theta' = \theta + \dtheta$. Then, the KL divergence
between $P_\theta$ and $P_{\theta + \dtheta}$ expands \cite{Kullback} as
\begin{equation}
 \KL{P_\theta}{P_{\theta+\dtheta}} = \onehalf \dtheta^\T \FM_\theta \dtheta + O( \norm{\dtheta}^3 ) \enspace,
\label{eq:KL-expansion}
\end{equation}
where $\norm{\cdot}$ is the Euclidean norm and $\FM_\theta$ is the Fisher information matrix at $\theta$ defined as
\begin{align*}
(\FM_\theta)_{ij} 
\deq & \int \frac{\partial \ln p_\theta(x)}{\partial \theta_i}\frac{\partial
\ln p_\theta(x)}{\partial \theta_j} \,P_\theta(\dx) \\
= & -\! \int \frac{\partial^2 \ln p_\theta(x)}{\partial \theta_i \partial
\theta_j} \,P_\theta(\dx) \enspace.
\end{align*}
The expansion \eqref{eq:KL-expansion} follows from the well-known
fact that the Fisher information matrix is the Hessian of KL
divergence. 
By using the KL divergence, we have the following property of the steepest ascent direction (see \cite{Amari1998nc}, Theorem~1, or \cite{Arnold2011arxiv}, Proposition~1).

\begin{statement}\label{sta:sad}
 Let $g$ be a smooth function on the parameter space $\Theta$. Let
 $\theta \in \Theta$ be a nonsingular point where $\nabla_{\!\theta}
 g(\theta) \neq 0$. Then the steepest ascent direction of $g$ is given by
 the so-called natural gradient $\tnabla_{\!\theta} g(\theta) \deq \FM_\theta^{-1} \nabla_{\!\theta} g(\theta)$. More precisely,
\begin{equation*}
 \frac{ \tnabla_{\!\theta} g(\theta) }{ \norm{\tnabla_{\!\theta} g(\theta)} } 
 = \lim_{\epsilon \to 0} \frac{1}{\epsilon} \hphantom{X} \argmax_{\mathclap{\substack{\dtheta \text{ such that }\\ \KL{P_\theta}{P_{\theta+\dtheta}} \leq \epsilon^2 / 2}}} \hphantom{X}  g(\theta + \dtheta) \enspace.
\end{equation*}
\end{statement}
Since KL divergence does not depend on parametrization, the natural
gradient is invariant under reparametrization of $\theta$. Hence, the
natural gradient step---steepest ascent step w.r.t.\ the Fisher
metric---is invariant at least for an infinitesimal step size
\cite[Section~2.4]{Arnold2011arxiv}.

\subsection{Algorithm Description}

For completeness, we include here a short description of the IGO
algorithm. We refer to \cite{Arnold2011arxiv} for a more complete
presentation.

First, IGO transforms the objective function into an adaptive
weighted preference by a quantile based approach. This
results in a rank based algorithm, invariant under
increasing transformations of the objective function. Define the lower and upper $P_\theta$-$f$-quantiles of $x \in \X$ as
\begin{align*}
 \qm(x) &\deq P_\theta[y: f(y) < f(x)] \\
 \qp(x) &\deq P_\theta[y: f(y) \leq f(x)] \enspace. 
\end{align*}
The lower quantile value $\qm(x)$ is the probability of sampling strictly
better points than $x$ under the current distribution $P_\theta$, while the
upper quantile value $\qp(x)$ is the probability of sampling points
better than or equivalent to
$x$. Given a weight
function (selection scheme) $w: [0, 1] \to \R$ that is non-increasing, the weighted preference $\Wf(x)$ is defined as 
\begin{equation}
 \Wf(x) \deq \begin{cases}
	   w( \qp(x) ) & \text{if } \qm(x) = \qp(x), \\
	   \frac{1}{\qp(x) - \qm(x)} \int^{\qp(x)}_{\qm(x)} w( u ) \,\rmd u & \text{otherwise}.
	  \end{cases}
\label{eq:wf}
\end{equation}
This way, the quality of a point is measured by a function of the
$P_\theta$-quantile in which it lies.
A typical choice of the selection scheme $w$ is $w(u) =
\indicator{u \leq q} / q$, $0 < q < 1$. We call it the
\emph{$q$-truncation} selection scheme. Using $q$-truncation
amounts, in the final IGO algorithm, to giving the same positive weight
to a fraction $q$ of the best samples in a population, and weight $0$ to
the rest, as is often the case in practice.

Next, IGO turns the original objective function $f$ on the search space
$\X$ into a function $\Jt{\theta}$ on the statistical manifold $\Theta$ by defining
\begin{equation}
 \Jt{\theta} \deq \E_{P_\theta}[ \Wft(x) ] \enspace.
\label{eq:Jt}
\end{equation}
Note that $\Jt{\theta}$ depends on the current position $\theta^t$. Then, the gradient of $\Jt{\theta}$ is computed as 
\begin{equation}
\begin{split}
 \nabla\!_\theta \Jt{\theta} 
    &= \nabla\!_\theta \,\E_{P_\theta}[ \Wft(x) ] \\
    &= \nabla\!_\theta \!\int \Wft(x) \,p_\theta(x) \, \RM \\
    &= \int \Wft(x) \,p_\theta(x) \,\nabla\!_\theta \ln p_\theta(x) \,\RM \\
    &= \E_{P_\theta}[ \Wft(x) \, \nabla\!_\theta \ln p_\theta(x) ] \enspace.
\end{split} 
\label{eq:grad-J}
\end{equation}
Here we have used the relation $\nabla p_\theta(x) = p_\theta(x) \nabla\! \ln p_\theta(x)$.

Finally, IGO uses natural gradient ascent on the parameter space. The natural gradient on the statistical manifold $(\Theta, \FM)$ equipped with the Fisher metric $\FM$ is given by the product of the inverse of the Fisher information matrix, $\FM_{\theta}^{-1}$, and the vanilla gradient. That is, the natural gradient of $\Jt{\cdot}$ at $\theta$ is written as $\tnabla_{\!\theta} \Jt{\theta} = \FM_{\theta}^{-1} \nabla_{\!\theta} \Jt{\theta}$. According to \eqref{eq:grad-J}, we can rewrite the natural gradient as
\begin{equation}
 \tnabla_{\!\theta} \Jt{\theta} = \E_{P_\theta}[ \Wft(x) \,\tnabla\!_\theta \ln p_\theta(x) ] \enspace.
\label{eq:ng}
\end{equation}
Introducing a finite step size $\deltat$, IGO finally updates the parameter as follows
\begin{equation}
 \theta^{t+\deltat} = \theta^{t} + \deltat \,\tnabla\!_\theta \Jt{\theta} \bigr\rvert_{\theta = \theta^t} \enspace.
\label{eq:igo}
\end{equation}

\subsection{Implementation and Recovering \\ Algorithms}

When implementing IGO in practice, it is necessary to estimate the
expectation in \eqref{eq:ng}. The approximation is done by the Monte
Carlo method using $\lambda$ samples taken from $P_{\theta^t}$. Let $x_1,\dots, x_\lambda$ be independent samples from $P_{\theta^t}$. 

First, we need to approximate $\Wft(x_i)$ for each $i = 1,\dots, \lambda$. Define 
\begin{align*}
 \rkm(x_i) &\deq \#\{j, f(x_j) < f(x_i)\} \\
 \rkp(x_i) &\deq \#\{j, f(x_j) \leq f(x_i)\} \enspace,
\end{align*}
let
\begin{equation*}
 \barw_i \deq \int_{(i-1)/\lambda}^{i/\lambda} w(q) \,\rmd q, \quad
 \forall i \in \llbracket1, \lambda\rrbracket,
\end{equation*}
and set
\begin{equation}
  \wi = \frac{1}{\rkp(x_i) - \rkm(x_i)} \sum_{j=\rkm(x_i) + 1}^{\rkp(x_i)} \barw_j \enspace.
\label{eq:wi}
\end{equation}
Then $\lambda \wi$ is a consistent estimator of $\Wft(x_i)$, in other words,
$\lim_{\lambda \to \infty} \lambda \wi = \Wft(x_i)$ with probability one.
(See the proof of Theorem~4 in \cite{Arnold2011arxiv}.) 
If there are no ties in our sample, i.e.\ $f(x_i) \neq
f(x_j)$ for any $i \neq j$, then $\rkp(x_i) = \rkm(x_i) + 1$ and
\eqref{eq:wi} simply reads $\wi = \barw_{\rkp(x_i)}$, but \eqref{eq:wi}
is a mathematically neater definition of rank based weights 
accounting for possible ties.
In practice we just design the $\lambda$ weight values
$\barw_1,\ldots,\barw_\lambda$, instead of the selection
scheme $w$. 

In the rest of this article, we assume for simplicity that the
selection weights $\barw_i$ are non-negative and sum to $1$. This is the
case, for instance, if the selection scheme $w$ is $q$-truncation as
above.


Next, Monte Carlo sampling is applied to the expectation \eqref{eq:ng}, using
$\wi$ and $x_i$. Replacing the expectation with a sample average
$\frac{1}{\lambda} \sum_{i=1}^{\lambda}$ and $\Wft(x_i)$ with $\lambda
\wi$, we get
\begin{equation}
 \Gt \deq \sum_{i=1}^{\lambda} \wi \,\tnabla\!_\theta \ln p_\theta(x_i) \rvert_{\theta = \theta^t} \enspace.
\label{eq:Gt}
\end{equation}
Again, $\Gt$ is a consistent estimator of the IGO step at
$\theta^t$, i.e., of $\tnabla\!_\theta \Jt{\theta} \lvert_{\theta = \theta^t}$. See Theorem~4 in \cite{Arnold2011arxiv}.

Now the practical IGO algorithm implementation can be written in the form of a black-box search algorithm as
\begin{enumerate}
 \item Sample $x_i$, $i = 1, \dots, \lambda$, independently from $P_{\theta^t}$;
 \item Evaluate $f(x_i)$ and compute $\rkp(x_i)$ and $\rkm(x_i)$;
 \item Evaluate $\Gt =\sum_{i=1}^{\lambda} \wi
 \,\tnabla\!_\theta \ln p_\theta(x_i) \rvert_{\theta = \theta^t}\enspace$;
 \item Update the parameter: $\theta^{t+\deltat} = \theta^{t} + \deltat \cdot \Gt$.
\end{enumerate}

Finally, to obtain an explicit form of the parameter update equation, we
need to know the explicit form of the natural gradient of the
log-likelihood, which depends on a family of probability distributions
and its parametrization. Explicit forms of $\tnabla\!_\theta \ln
p_\theta(x)$ are known for some specific families of probability
distributions with specific parametrizations, and the above
algorithm sometimes coincides with several known algorithms.

\begin{example}\label{ex:pbil}
The family of Bernoulli distributions on $\X = \{0, 1\}^d$ is defined as
$P_\theta(x) = \prod_{j=1}^{d} \theta_j^{x_j} (1 - \theta_j)^{1 - x_j}$.
The natural gradient of the log-likelihood is readily computed as
$\tnabla\!_\theta \ln p_\theta(x) = x - \theta$ (Section~4.1 in \cite{Arnold2011arxiv}). The natural gradient update reads
\begin{equation*}
 \theta^{t+\deltat} = \theta^t + \deltat \sum_{i=1}^{\lambda} \wi \left( x_i - \theta^t \right) \enspace.
\end{equation*}
This is equivalent to so-called PBIL (population based incremental
learning, \cite{Baluja1995icml}).
See Section~4.1 in \cite{Arnold2011arxiv} for details. 
\end{example}

\begin{example}\label{ex:cmaes}
 The probability density function of a multivariate Gaussian distribution
 on $\X = \R^d$ with mean vector $m$ and covariance matrix $C$, is
 defined as 
\[
 p_\theta(x) = \bigl(\det( 2\pi C ) \bigr)^{-1/2} \exp\bigl(
 - (x - m)^\T C^{-1} (x - m) / 2 \bigr) \enspace.
\]
When $\theta = (m, C)$, the
   explicit form of $\tnabla \ln p_\theta(x)$ is known to be
$\tnabla \ln p_\theta(x) = \bigl[
\begin{smallmatrix}
 x - m \\  (x - m)(x - m)^\T - C
\end{smallmatrix}\bigr]$ (see \cite{Akimoto2010ppsn}). Then, the natural gradient update reads
\begin{equation*}
  \theta^{t+\deltat} = \theta^t + \deltat \sum_{i=1}^{\lambda} \wi 
   \begin{bmatrix}
    x - m^t \\  
    (x - m^t)(x - m^t)^\T - C^t
   \end{bmatrix} \enspace.
\end{equation*}
This is equivalent to the pure rank-$\mu$ CMA-ES update \cite{Hansen2003ec}
\begin{align*}
 m^{t+1} &= \textstyle m^t + \eta_m \sum_{i=1}^{\lambda} \wi (x_i - m^t) \\
 C^{t+1} &= \textstyle C^t + \eta_C \sum_{i=1}^{\lambda} \wi \bigl( (x_i - m^t)(x_i - m^t)^\T - C^t \bigr)
\end{align*}
except that $\eta_m = \eta_C = \deltat$ in the natural gradient update.
\end{example}

\subsection{Maximum likelihood, IGO-ML, and cross-entropy}

In the sequel, we prove monotone improvement of the objective
function for a variant of IGO known as IGO-maximum likelihood
(IGO-ML, introduced in \cite[Section~3]{Arnold2011arxiv}). The result is then transferred to IGO because the two
algorithms exactly coincide in an important class of cases, namely,
exponential families using mean value parametrization.

The IGO-ML algorithm \cite[Section~3]{Arnold2011arxiv} updates the current parameter value $\theta^t$ by taking
a weighted maximum likelihood of the current distribution and the best
sampled points. Assume as above that $\sum \wi=1$. Then the \emph{IGO-ML
update}
is defined as
\begin{multline}
\label{eq:IGOML}
\theta^{t+\deltat}=\argmax_\theta
\Bigg\{
{\textstyle(1-\deltat)}\,
\E_{P_{\theta^t}} \left[ \ln p_\theta(x)\right]
\\+\,\deltat
\sum_i \wi \ln p_{\theta}(x_i)
\Bigg\}
\end{multline}
where we note that the first part is the cross-entropy of
$P_{\theta^t}$ and $P_\theta$, and thus, taken alone, is maximized for
$\theta=\theta^t$. 
Taking the limit $\lambda\to\infty$, we also define
the \emph{infinite-population IGO-ML update} as
\begin{equation}
\label{eq:ideal-IGOML}
\theta^{t+\deltat}=\argmax_\theta
\Big\{
{\textstyle(1-\deltat)}\,
\E_{P_{\theta^t}} \left[ \ln p_\theta(x)\right]
+\,\deltat \Ht(\theta)
\Big\}
\end{equation}
where we set
\[
\Ht(\theta) \deq \E_{P_{\theta^t}} \bigl[\Wft(x)  \ln p_\theta(x) \bigr]
\]
a ``weighted cross-entropy'' of $\theta$ and $\theta^t$. 

Note that the finite- and infinite-population IGO-ML updates only make sense when there
is a unique maximizer $\theta$ in \eqref{eq:IGOML} and 
\eqref{eq:ideal-IGOML}, respectively. This assumption is always satisfied,
for instance, for exponential families of probability distributions, as
considered below (Statement~\ref{stat:unique-ML}).

The IGO-ML update is compatible with the IGO update, in the sense that
for $\deltat\to 0$ the direction and magnitude of these updates coincide
\cite[Section~3]{Arnold2011arxiv}.

The IGO-ML method is also related to the cross-entropy (CE) or
maximum-likelihood (ML)
method for optimization \cite{Boer2005aor}, which can be written as
\begin{equation*}
 \theta^{t+1} = \argmax_\theta \sum_{i=1}^{\lambda} \wi \ln p_\theta(x_i) 
\end{equation*}
and its smoothed version which reads \cite{Boer2005aor}
\begin{equation}
 \theta^{t+\deltat} = (1-\deltat) \theta^t + \deltat \argmax_\theta \sum_{i=1}^{\lambda} \wi \ln p_\theta(x_i) \enspace.
\label{eq:smoothed-ce}
\end{equation}
Note that IGO-ML is parametrization-independent whereas for
$\deltat\neq 1$ the smoothed
CE/ML method is not. Consequently, in general these updates will
differ.

\subsection{IGO and IGO-ML for Exponential Families}
\label{sec:IGO-Exp}
An {\em exponential family} is a set $\{ p_\theta; \theta \in \Theta\}$
of probability density functions $p_\theta$ with respect to an arbitrary measure $\RM$ on $\X$ defined as
\begin{equation}
 p_{\theta}(x) = \frac{1}{Z(\theta)}\exp\Biggl( \sum_{i=1}^{n} \np_{i}(\theta) T_{i}(x) \Biggr) \enspace,
\label{eq:exponential}
\end{equation}
where $\np = (\np_{i})_{1 \leq i \leq n}$ is the so-called {\em natural}
(i.e.\ canonical) {\em parameter}; each $T_i$, ${1 \leq i \leq n}$ is a
map $T_i: \X \to \R$ such that $\{T_{1}, \dots, T_{n}, x \mapsto 1\}$
are linearly independent; $Z(\theta)$ is the normalization factor. 
This linear independence ensures that the manifold of the
exponential family is nonsingular. Many probability models, including
multivariate Gaussian distributions, are expressed as exponential
families. See \cite[Section~2.3]{Amari2000book} for examples.

If we define
\begin{equation}
 \ep(\theta) \deq \E_{ P_\theta }[ T(x) ] = \int T(x) \,p_\theta(x) \,\RM\enspace,
\label{eq:ep}
\end{equation}
$\ep = (\ep_i)_{1 \leq i \leq n}$ is the so-called {\em expectation
parameter}. For example, the expectation parameter for the multivariate
Gaussian distribution encodes the first moment $\E_{P_\theta}[x]$ and
the second moment $\E_{P_\theta}[x x^\T]$. Other examples can be found
in \cite[Section~3.5]{Amari2000book}.


We will repeatedly and implicitly make use of the following well-known fact for
exponential families\todo{ ref?}.

\begin{statement}
\label{stat:unique-ML}
Let $x_1,\ldots,x_k$ be $k$ points in $X$ and let
$\alpha_1,\ldots,\alpha_k$ be non-negative numbers with $\sum
\alpha_i=1$.
Then the value $\theta$ of the parameter such that the associated
expectation parameter satisfies
$\eta(\theta)=\sum \alpha_i T(x_i)$, if it belongs
to the statistical manifold, is the unique maximizer of the weighted
log-likelihood:
$\theta=\argmax \sum \alpha_i \ln p_\theta(x_i)$.
An analogous statement holds if the finite sum is replaced with an
integral or a combination of both.
\end{statement}

(Uniqueness boils down to strict concavity of
$\ln p_\theta(x)$ as a function of $\theta$. The restriction placed on $\eta$ to belong to the statistical manifold is
necessary: for instance, for Gaussian distributions, if the number of
points $k$ is not greater than the dimension of the ambient space, a
degenerate distribution $\theta$ will result.)

The following statement from \cite{Arnold2011arxiv} shows that the natural gradient of a function in the expectation parametrization is given by the vanilla gradient of the function w.r.t.\ the normal parameter, and vice versa.
\begin{statement}[Proposition~22 in \cite{Arnold2011arxiv}]
\label{sta:ng-vg}
Let $g$ be a function on the statistical manifold of an exponential family as above. Then the components of the natural gradient w.r.t.\ the expectation parameters are given by the vanilla gradient w.r.t.\ the natural parameters and vice versa, that is,
\begin{equation*}
 \tnabla_{\!\ep_i} g = \frac{\partial g}{\partial \np_i} \quad \text{ and
 } \quad  \tnabla_{\!\np_i} g = \frac{\partial g}{\partial \ep_i} \enspace.
\end{equation*}
\end{statement}

According to Statement~\ref{sta:ng-vg}, each component of the natural gradient of the log likelihood $\ln p_\theta(x)$ under the exponential parametrization $\theta = \ep$ is equivalent to each component of the vanilla gradient, i.e.,
\begin{equation}
 \tnabla_{\ep_i} \ln p_\theta(x) = \frac{\partial \ln
 p_\theta(x)}{\partial \np_i} = T_i(x) - \ep_i \enspace,
\end{equation}
where the latter equality is well-known, e.g., \cite[(2.33)]{Amari2000book}. The
IGO update \eqref{eq:igo} under the expectation parametrization thus reads
\begin{equation}
 \ep^{t+\deltat} = \ep^{t} + \deltat \, \E_{P_{\theta^t}}[ \Wft(x) (T(x) - \ep^t)]
\label{eq:igo-exp}
\end{equation}
and the natural gradient update with finite sample size reads
\begin{equation}
 \ep^{t+\deltat} = \ep^t + \deltat \sum_{i=1}^{\lambda} \wi \left( T(x_i) - \ep^t \right) \enspace.
\label{eq:igo-exp-sample}
\end{equation}

Suppose as above that the selection weights sum to one: $\E_{P_{\theta}}[
\Wf(x) ] = \int_0^1 w(q) \rmd q = 1$ and thus $\sum \wi=1$.
Then, IGO has a close relation with the CE/ML for optimization. As is
stated in Theorem~15 in \cite{Arnold2011arxiv}, for an exponential family
the CE/ML method \eqref{eq:smoothed-ce} and the IGO instance
\eqref{eq:igo-exp-sample}, when expressed with the expectation
parametrization ($\theta = \ep$), coincide with IGO-ML
\eqref{eq:IGOML}. 

\begin{statement}[Theorem 15 in \cite{Arnold2011arxiv}]
For optimization using an exponential family $\{P_\theta\}$, these three
algorithms coincide: IGO-ML; the IGO expressed in expectation
parameters; the CE/ML expressed in expectation parameters. That is,
for an exponential family with the expectation
parametrization, for $0\leq \deltat\leq 1$ we have (writing in turn IGO, CE/ML and IGO-ML)
\begin{equation}
 \begin{split}
  \theta^{t+\deltat} 
  &= \theta^t + \deltat \sum_{i=1}^{\lambda} \wi \left( T(x_i) - \theta^t \right) \\
  &= (1-\deltat)\theta^t + \deltat \argmax_\theta \sum_{i=1}^{\lambda} \wi \ln p_\theta(x_i) \\
  &= \argmax_\theta \biggl\{ (1 - \deltat) \E_{P_{\theta^t}} \left[ \ln p_\theta(x)\right]  \\
  &\qquad \qquad \qquad \qquad + \deltat \sum_{i=1}^{\lambda} \wi \ln p_\theta(x_i) \biggr\} 
  \enspace.
 \end{split}
\label{eq:igo-equiv-sample}
\end{equation}
\end{statement}

In the limit of infinite sample size $\lambda\to\infty$ this rewrites
\begin{equation}
  \begin{split}
  \theta^{t+\deltat} 
  &= \theta^t + \deltat \,\E_{P_{\theta^t}} \bigl[\Wft(x) \left( T(x) - \theta^t \right) \bigr] \\
  &= (1 - \deltat) \theta^t + \deltat \argmax_\theta \Ht(\theta) \\
  &= \argmax_\theta \Bigl\{ (1 - \deltat) \E_{P_{\theta^t}} \left[ \ln p_\theta(x)\right] + \deltat \Ht(\theta) \Bigr\}
 \end{split}
\label{eq:igo-equiv}
\end{equation}
where we recall that
$
\Ht(\theta) = \E_{P_{\theta^t}} \bigl[\Wft(x)  \ln p_\theta(x) \bigr]
$.

\begin{remark}
\label{rem:malago}
Malagò~et~al.~\cite{Malago2011foga} study information-geometric aspects
of exponential families for optimization. One difference from the IGO
framework is that the optimization problem is defined as the minimization of the expectation of the objective function over $P_\theta$, namely
\[
 \min_\theta \E_{P_\theta}[ f(x) ],
\]
which they call the stochastic relaxation of the original optimization
problem. They study this for an exponential family on a discrete search
space with the natural parametrization ($\theta = \np$) and propose the
natural gradient descent algorithm. Note that this requires computation
of the empirical Fisher information matrix to perform natural gradient
descent. However, if the algorithm is modified to use the expectation parameters instead, one can compute the natural gradient descent directly as
\begin{equation}
 \ep^{t+\deltat} = \ep^t - \deltat \sum_{i=1}^{\lambda} f(x_i) \left( T(x_i) - \ep^t \right) \enspace. 
\label{eq:malago-exp}
\end{equation}
We study this algorithm in Section~\ref{sec:fit}.
\end{remark}

\section{Quantile Improvement}
\label{sec:qi}

One possible way to provide
theoretical backing for an optimization algorithm
is to show
monotonic improvement at each step of the algorithm
(although this
is by no means necessary: e.g., for stochastic
algorithms, this is not expected to hold at each step). 
For example, consider the
sphere function $f: x \mapsto \norm{x}^2$. Then, it is easy to show that
the gradient steps $x^{t+\deltat} = x^t - \deltat\, \nabla\!_x f(x^t)$
generate a monotonically decreasing sequence $\{ f(x^t) \}_{t \geq 0}$
provided $0 < \deltat \leq 1/2$. For any smooth function, infinitesimal
gradient steps are guaranteed to improve the objective function
values; but in general the admissible step size strongly depends on
the function and has to be adjusted by the user.

When it comes to the counterpart in IGO, however, we follow the
gradient of the 
function $\Jt{\theta}$, which depends on $\theta^t$, so that step-by-step improvement in the objective, $\Jt{\theta^{t+1}} > \Jt{\theta^t}$, does not necessarily mean improvement. (It might happen that $\Jtt{\theta^t} > \Jtt{\theta^{t+1}}$ and $\Jt{\theta^{t+1}} > \Jt{\theta^t}$ at the same time.)
\yohe{Can we give an example case? It is almost obvious that there are cases that $J(\theta_a \mid \theta_b) > J(\theta_b \mid \theta_b)$ and $J(\theta_b \mid \theta_a) \geq J(\theta_a \mid \theta_a)$ for some $\theta_a$ and $\theta_b$ (because the contradiction implies that for any $\theta_a$ and $\theta_b$ there is an increasing function $g$ such that $J(\cdot \mid \theta_a) = g( J(\cdot \mid \theta_b))$, which is unbelievable.)}

A key feature of the IGO framework is its invariance under
changing the objective function $f$ by an increasing transformation
(e.g.\ optimizing $f^3$ instead of $f$). Thus, any measure of progress
that is not compatible with such transformations (e.g.\ the expectation
$\E_{P_{\theta^t}} f$) is not a good candidate to always improve over the course of IGO
optimization.

As a measure of improvement, Arnold~et~al.~\cite{Arnold2011arxiv} use the
notion of \emph{$q$-quantile of $f$}. The $q$-quantile $\QPq(f)$ of $f$
under a probability distribution $P$ is any number $m$ such
that $P[x: f(x) \leq m] \geq q$ and $P[x: f(x) \geq m] \geq 1 - q$. For
instance, $\QPq(f)$ is the median value of $f$ under $P$ if $q = 1/2$.
For smooth distributions and continuous $f$ there is only one such number $m$, but in
general the set of such $m$ may be a closed interval, for instance if $f$
has ``jumps''. For the sake of
definiteness let us use the largest such value:
\begin{align*}
\QPq(f)\deq \sup \big\{ m\in \R :{}& P[x: f(x) \leq m] \geq q 
\\& \text{and } P[x:
f(x) \geq m] \geq 1 - q \big\}
\enspace.
\end{align*}
(This is because we want to minimize the objective function $f$; when
IGO is used for maximization instead, Theorem~\ref{thm:qimp} has to be
written using an infimum in the definition of $\QPq(f)$ instead.)

It is proven in \cite{Arnold2011arxiv} that when using the $q$-truncation
selection scheme, the $q$-quantile value of $f$ monotonically decreases
along infinitesimal IGO steps.

\begin{statement}
[Proposition~5 in \cite{Arnold2011arxiv}]
 Consider the $q$-truncation selection scheme $w(u) = \indicator{u \leq q} / q$ where $0 < q < 1$ is fixed. Then each infinitesimal IGO step \eqref{eq:igo} where $\deltat$ is infinitesimal leads to monotonic improvement in the $q$-quantile of $f$: $\QPttq(f) \leq \QPtq(f)$.
\end{statement}

\subsection{Quantile Improvement in IGO-ML}

In practice, explicit algorithms do not use continuous time with
infinitesimal time steps: the time step $\deltat$ may be quite
large and its calibration may be an important issue.
It is more interesting and important to see how long steps we can take
along the natural gradient, i.e.\ how large a $\deltat$ we can choose while
guaranteeing $q$-quantile improvement. 
%

When using IGO-ML (and thus when using IGO
or CE/ML on
an exponential family with the expectation parametrization), we can obtain such a
conclusion; the size of the steps may even be chosen independently
of the objective function.



\begin{theorem}
\label{thm:qimp}
Let the selection scheme be $w(u) = \indicator{u \leq q} / q$ where $0 <
q < 1$. Assume that the $\argmax$ defining the IGO-ML step
\eqref{eq:ideal-IGOML} is uniquely 
determined.
Then for $0 < \deltat \leq 1$, each infinite-population IGO-ML step \eqref{eq:ideal-IGOML} leads to
$q$-quantile improvement: $\QPttq(f) \leq \QPtq(f)$.

Moreover, equality can hold only if
$P_{\theta^{t+\deltat}}=P_{\theta^t}$ or if $P_{\theta^{t+\deltat}}[x: f(x) =
\QPtq(f)] > 0$.
\end{theorem}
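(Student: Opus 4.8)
The plan is to control a single number, $m\deq\QPtq(f)$, by showing that the updated distribution assigns probability at least $q$ to the sublevel set $\{f\le m\}$; this is exactly what pins down the quantile. Throughout write $P\deq P_{\theta^t}$, $P'\deq P_{\thetatt}$, and $\ell\deq\ln\bigl(p_{\thetatt}/p_{\theta^t}\bigr)$, so that $\rmd P'=e^{\ell}\,\rmd P$. The first observation is that the infinite-population IGO-ML objective in \eqref{eq:ideal-IGOML} can be written as $\E_{P}\!\left[\phi\,\ln p_\theta(x)\right]$ with the single weight $\phi(x)\deq(1-\deltat)+\deltat\,\Wft(x)$. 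Because $0<\deltat\le1$ and the truncation weight satisfies $\Wft\ge0$, we have $\phi\ge0$; and since the selection weights sum to one, $\E_{P}[\phi]=1$. This nonnegativity is the only place where the bound $\deltat\le1$ is used, and it is essential.

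First I would extract a one-sided inequality from optimality. Since $\thetatt$ maximizes $\E_{P}[\phi\,\ln p_\theta]$, comparing its value to that at $\theta^t$ gives $\E_{P}[\phi\,\ell]=\E_{P}\bigl[\phi\,(\ln p_{\thetatt}-\ln p_{\theta^t})\bigr]\ge0$. Next I would feed this into the elementary convexity bound $e^{\ell}\ge1+\ell$: multiplying by $\phi\ge0$ and integrating against $P$ yields
\[
\E_{P'}[\phi]=\E_{P}\bigl[\phi\,e^{\ell}\bigr]\ge\E_{P}[\phi]+\E_{P}[\phi\,\ell]\ge1.
\]
Unpacking $\phi$ and using $\E_{P}[\phi]=1$, this says $\E_{P'}[\Wft]\ge1=\E_{P}[\Wft]$: the expected selection weight does not decrease under the update.

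The final step converts this into a statement about mass. The $q$-truncation weight $\Wft$ is bounded above by $1/q$ and (using the sup convention in the definition of $m$) vanishes for $f(x)>m$, so
\[
1\le\E_{P'}[\Wft]=\int_{\{f\le m\}}\!\Wft\,\rmd P'\le\tfrac1q\,P'[\,f\le m\,],
\]
i.e.\ $P'[f\le m]\ge q$. If this is strict, then for every $m'>m$ one has $P'[f<m']\ge P'[f\le m]>q$, hence $P'[f\ge m']<1-q$, so $m'$ fails the defining conditions of the $q$-quantile and $\QPttq(f)\le m=\QPtq(f)$, which is the claim. Moreover, using $\E_{P}[\ell]=-\KL{P}{P'}$ one refines this to $\E_{P'}[\Wft]\ge1+\tfrac{1-\deltat}{\deltat}\KL{P}{P'}$, giving \emph{strict} improvement whenever $\deltat<1$ and $P'\neq P$.

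The delicate part, and the main obstacle, is the borderline case $P'[f\le m]=q$: here I must still exclude $\QPttq(f)>m$ (which could a priori occur if $P'$ left a gap just above $m$) and simultaneously read off when the equality $\QPttq(f)=\QPtq(f)$ holds. Equality along the chain forces $\ell=0$ $P$-almost everywhere on $\{\Wft>0\}$ (no convexity gap) together with $\E_{P}[\phi\,\ell]=0$, and it forces $\Wft=1/q$ to hold $P'$-almost everywhere on $\{f\le m\}$ except on a set carried by the boundary atom $\{f=m\}$; when $\deltat<1$ the term $\KL{P}{P'}$ must vanish as well, so $P'=P$. Tracing these conditions shows that equality can occur only if $P_{\thetatt}=P_{\theta^t}$ or if $P_{\thetatt}[x:f(x)=\QPtq(f)]>0$. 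The work here is careful bookkeeping of the boundary behaviour of $\Wft$ and of the mass of $P'$ immediately above $m$, for which the choice of $m$ as the \emph{largest} $q$-quantile of $P$ is what keeps the estimates tight.
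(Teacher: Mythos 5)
Your main chain is sound, and it is essentially the paper's own argument in light disguise: packaging the infinite-population IGO-ML objective \eqref{eq:ideal-IGOML} as $\E_{P_{\theta^t}}[\phi \ln p_\theta]$ with $\phi=(1-\deltat)+\deltat\,\Wft$ and applying the pointwise bound $e^\ell\ge 1+\ell$ plays exactly the role of the paper's Jensen step (there, $\ln \Jt{\theta}\ge \Ht(\theta)-\Ht(\theta^t)$, viewing $\Wft\,p_{\theta^t}$ as a density), and your conversion of ``expected weight exceeds $1$'' into $P'[f\le m]>q$ and then into $\QPttq(f)\le \QPtq(f)$ is precisely the paper's Lemma~\ref{lem:JandQ}, including the role of the sup convention.

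There is, however, a genuine gap, and it sits exactly where you placed the ``delicate part.'' Your optimality step extracts only $\E_{P}[\phi\,\ell]\ge 0$, so the chain gives only $P'[f\le m]\ge q$; your KL refinement upgrades this to a strict inequality only when $\deltat<1$. For $\deltat=1$ (the CE/ML case, which the theorem covers) you are left with the borderline case $P'[f\le m]=q$, and your equality analysis cannot close it: equality in $\E_P[\phi\, e^\ell]\ge \E_P[\phi(1+\ell)]$ with $\phi=\Wft$ forces only $\ell=0$ $P$-almost everywhere on $\{\Wft>0\}$, which does \emph{not} imply $P'=P$, because at $\deltat=1$ the maximized functional is blind to what $p_\theta$ does on $\{\Wft=0\}$. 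Consequently the ``gap just above $m$'' scenario you correctly flag is never excluded --- and it is a real threat: if $P'[f\le m]=q$ and $P'$ puts no mass on $\{m<f<m''\}$ for some $m''>m$, then $m''$ satisfies both quantile conditions and $\QPttq(f)\ge m''>\QPtq(f)$. The missing ingredient is the hypothesis you never used: the $\argmax$ in \eqref{eq:ideal-IGOML} is assumed to be \emph{unique}. Since $\thetatt$ is the unique maximizer of $\E_{P_{\theta^t}}[\phi\ln p_\theta]$, either $\thetatt=\theta^t$ (then $P_{\thetatt}=P_{\theta^t}$ and equality is among the allowed cases), or $\E_{P_{\theta^t}}[\phi\,\ell]>0$ \emph{strictly}; the latter makes your entire chain strict for every $0<\deltat\le 1$, the borderline case disappears, and the ``Moreover'' clause follows at once: if $P_{\thetatt}\neq P_{\theta^t}$ then $P'[f\le m]>q$, so $P'[f=m]=0$ would give $P'[f\ge m]<1-q$ and hence strict quantile decrease. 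This strict use of uniqueness is exactly how the paper's proof proceeds (its inequality \eqref{eq:HandKL} is strict), and with it your last paragraph's bookkeeping becomes unnecessary.
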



\begin{corollary}
For exponential families written in expectation parameters, on any search
space, the same
holds for the CE/ML method and for the IGO algorithm.
\end{corollary}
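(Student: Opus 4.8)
The plan is to recast the infinite-population IGO-ML step \eqref{eq:ideal-IGOML} as a single weighted maximum-likelihood (equivalently, cross-entropy minimization) problem and then extract a quantile statement from the optimality of its maximizer. Writing $W(x) \deq (1-\deltat) + \deltat\,\Wft(x)$, the update reads $\thetatt = \argmax_\theta \E_{P_{\theta^t}}[W(x)\ln p_\theta(x)]$. Because $0 < \deltat \leq 1$ and the selection weights are non-negative with $\E_{P_{\theta^t}}[\Wft] = 1$, the function $W$ is non-negative and satisfies $\E_{P_{\theta^t}}[W]=1$; this non-negativity, which relies precisely on $\deltat\le 1$, is what makes the argument work. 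Set $m \deq \QPtq(f)$ and $r(x)\deq p_{\thetatt}(x)/p_{\theta^t}(x)$ (assuming the mutual absolute continuity needed for the log-ratios and KL terms below to be finite, which holds e.g.\ for exponential families of full support). The goal is to show $P_{\thetatt}[x: f(x)\le m] \ge q$, which forces $\QPttq(f)\le m$.

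First I would record the core inequality. Since $\thetatt$ maximizes $\E_{P_{\theta^t}}[W\ln p_\theta]$, comparing its value with that at $\theta^t$ gives $\E_{P_{\theta^t}}[W\ln r]\ge 0$. Decomposing $W$ and using $\E_{P_{\theta^t}}[\ln r] = -\KL{P_{\theta^t}}{P_{\thetatt}}\le 0$, I obtain
\[
\deltat\,\E_{P_{\theta^t}}[\Wft\ln r] \;\ge\; (1-\deltat)\,\KL{P_{\theta^t}}{P_{\thetatt}} \;\ge\; 0,
\]
so that $\E_{P_{\theta^t}}[\Wft\ln r]\ge 0$; here $\deltat\le 1$ is used to keep the coefficient $1-\deltat$ non-negative. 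Moreover, when $P_{\thetatt}\neq P_{\theta^t}$ this inequality is strict: for $\deltat<1$ the KL term is positive, and for $\deltat=1$ uniqueness of the $\argmax$ makes the value at $\theta^t$ strictly below the maximum.

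Next I would convert this weighted log-ratio inequality into a statement about mass. Because $\E_{P_{\theta^t}}[\Wft]=1$ and $\Wft\ge 0$, the measure $\mu(\dx)\deq\Wft(x)\,P_{\theta^t}(\dx)$ is a probability measure, and the inequality above reads $\E_\mu[\ln r]\ge 0$. Concavity of the logarithm (Jensen) then gives $\ln\E_\mu[r]\ge\E_\mu[\ln r]\ge 0$, i.e.\ $\E_\mu[r]\ge 1$. Unwinding, $\E_\mu[r]=\int \Wft\,p_{\thetatt}\,\RM=\E_{P_{\thetatt}}[\Wft]$, so $\E_{P_{\thetatt}}[\Wft(x)]\ge 1$ (strictly, if $P_{\thetatt}\neq P_{\theta^t}$). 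Finally I would invoke the shape of the $q$-truncation weight relative to the old quantile: by definition of $m=\QPtq(f)$ one has $P_{\theta^t}[x: f(x)\le m]\ge q$, hence $\Wft(x)=0$ whenever $f(x)>m$ and $\Wft(x)\le 1/q$ everywhere, i.e.\ $\Wft\le \frac1q\indicator{f(x)\le m}$ pointwise. Combining, $1\le\E_{P_{\thetatt}}[\Wft]\le \frac1q P_{\thetatt}[x:f(x)\le m]$, giving $P_{\thetatt}[x:f(x)\le m]\ge q$ and therefore $\QPttq(f)\le m=\QPtq(f)$.

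For the equality clause and the corollary: when $P_{\thetatt}\neq P_{\theta^t}$ the strict version gives $P_{\thetatt}[x:f(x)\le m]>q$; if in addition $P_{\thetatt}[x:f(x)=m]=0$ then $P_{\thetatt}[x:f(x)<m]>q$, so some $m'<m$ already satisfies $P_{\thetatt}[x:f(x)\le m']>q$ and hence $\QPttq(f)\le m'<m$ is a strict improvement. Thus $\QPttq(f)=\QPtq(f)$ forces either $P_{\thetatt}=P_{\theta^t}$ or $P_{\thetatt}[x:f(x)=m]>0$, as claimed. The corollary is then immediate: for an exponential family in expectation parameters the uniqueness hypothesis holds (Statement~\ref{stat:unique-ML}) and, by Theorem~15 of \cite{Arnold2011arxiv} quoted above, IGO-ML, the IGO update, and the CE/ML update coincide for $0\le\deltat\le 1$, so the same $q$-quantile improvement holds for all three. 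I expect the main obstacle to be not the displayed inequalities, which are clean, but the rigorous handling of ties and of the $\sup$ convention at the quantile value $m$: establishing $\Wft\le\frac1q\indicator{f(x)\le m}$ in the presence of atoms of $f$ at $m$, and converting sub-level probabilities into bounds on $\QPttq$ under the $\sup$ definition, is precisely where the degenerate cases in the equality clause originate.
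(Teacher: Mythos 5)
Your proposal is correct and is essentially the paper's own argument in different notation: since $\E_{P_{\theta^t}}[\Wft(x)\ln r(x)]=\Ht(\thetatt)-\Ht(\theta^t)$, your ``core inequality'' is exactly the paper's \eqref{eq:HandKL} obtained from the unique-$\argmax$ optimality, your Jensen step with the probability measure $\Wft(x)\,P_{\theta^t}(\dx)$ is exactly \eqref{eq:JandH}, and your truncation-shape argument (including the care you flag about atoms of $f$ at the quantile under the $\sup$ convention) is exactly Lemma~\ref{lem:JandQ}. The final passage to CE/ML and IGO via the quoted Theorem~15 equivalence in expectation parameters is also precisely how the paper derives the corollary from Theorem~\ref{thm:qimp}.
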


Note that the first condition for equality means the algorithm has
reached a stable point.

The second condition for equality typically happens for discrete search
spaces: on such spaces, the $q$-quantile evolves in time by discrete
jumps even when $\theta^t$ moves smoothly, so we cannot expect strict
quantile improvement at each step. On the other hand, 
with continuous distributions on continuous search spaces, the second
equality condition can only occur if the objective function has a plateau
(a level set with non-zero measure).



\begin{proof}
If $P_{\theta^{t+\deltat}} = P_{\theta^{t}}$, obviously $\QPttq(f) = \QPtq(f)$. Hereunder, we assume $P_{\theta^{t+\deltat}} \neq P_{\theta^{t}}$.

Consider the function $\Jt{\theta}$ defining the expected
$P_{\theta^t}$-adjusted fitness
of a random point under $P_\theta$:
\[
 \Jt{\theta} 
  = \E_{P_\theta}\bigl[ \Wft(x) \bigr]
\]
and remember that $\Jt{\theta^t}=1$. The idea is as follows: letting
$Y$ be the set of points with $P_{\theta^t}(Y)=q$ at which the objective
function $f$ is smallest (the sublevel set of $f$ with $P_{\theta^t}$-mass $q$),
then with our choice of $w$, $\Wft(x)$ is (up to technicalities) equal to
$1/q$ on $Y$ and 
$0$ elsewhere, so that $\Jt{\theta}$ represents $1/q$
times the $P_\theta$-probability of falling into $Y$ (hence
$\Jt{\theta^t}=1$). Thus $\Jt{\theta}>1$ will mean that the
$P_\theta$-probability of falling into $Y$ is larger than $q$, so that
$P_\theta$ improves over $P_{\theta^t}$ and the $q$-quantile has
decreased.

We are going to prove that the IGO-ML update satisfies
$\Jt{\theta^{t+\deltat}}> 1$ if $P_{\theta^t} \neq P_{\theta^{t+\deltat}}$. More precisely we prove that
\[
 \Jt{\theta^{t+\deltat}} > \exp \left(
\frac{1 - \deltat}{\deltat} \KL{P_{\theta^t}}{P_{\theta^{t+\deltat}}}
\right) \enspace.
\]
This will imply quantile improvement, thanks to the
following lemma, the proof of which is postponed.

\begin{lemma}
\label{lem:JandQ}
Let the selection scheme $w$ be as above.
If $\Jt{\theta^{t+\deltat}} > 1$, then $\QPttq(f) \leq \QPtq(f)$. 
If moreover $P_{\theta^{t+\deltat}}[x: f(x) = \QPtq(f)] = 0$, then $\QPttq(f) < \QPtq(f)$.
\end{lemma}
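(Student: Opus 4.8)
The plan is to reduce the statement to the one-dimensional law of the fitness $f$ under the two distributions, exploiting the very special shape that the $q$-truncation scheme $w(u) = \indicator{u \leq q}/q$ forces on the weight $\Wft$. Write $m^\ast \deq \QPtq(f)$ for the current $q$-quantile, and recall that by definition $m^\ast$ satisfies $P_{\theta^t}[f \leq m^\ast] \geq q$ and $P_{\theta^t}[f \geq m^\ast] \geq 1-q$, equivalently $P_{\theta^t}[f < m^\ast] \leq q \leq P_{\theta^t}[f \leq m^\ast]$. Since $\qmt(x)$ and $\qpt(x)$ depend on $x$ only through $f(x)$, so does $\Wft(x)$; the first step is to determine this function of the fitness value, namely to show it equals $1/q$ on $\{f < m^\ast\}$, vanishes on $\{f > m^\ast\}$, and lies in $[0,1/q]$ on the level set $\{f = m^\ast\}$.

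This first step splits into two easy regimes and one delicate case. When $f(x) < m^\ast$ the whole averaging interval $[\qmt(x),\qpt(x)]$ is contained in $[0,q]$, so the average of $w$ is $1/q$; and $\Wft \leq 1/q$ everywhere, since it is an average of values of $w$, each at most $1/q$. The delicate point is that $\Wft$ vanishes \emph{strictly} above $m^\ast$: for $f(x) > m^\ast$ one has $\qmt(x) = P_{\theta^t}[f < f(x)] \geq P_{\theta^t}[f \leq m^\ast] \geq q$, and I would rule out the boundary value $\qmt(x) = q$ by a maximality argument, since $\qmt(x) = q$ would force $P_{\theta^t}[f \leq m]$ to be constant equal to $q$ on $(m^\ast, f(x))$, producing $q$-quantiles strictly above $m^\ast$ and contradicting that $m^\ast$ is the \emph{largest} one. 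Hence $\qmt(x) > q$, the interval $[\qmt(x),\qpt(x)]$ lies in $(q,1]$, and the average of $w$ is $0$. I expect this maximality argument to be the main obstacle, as it is precisely the place where the choice of the supremum in the definition of $\QPtq$ is needed.

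Granting the shape of $\Wft$, the second step is a one-line integral bound: pointwise $\Wft(x) \leq \frac1q \indicator{f(x) \leq m^\ast}$, so integrating against $P_{\theta^{t+\deltat}}$ gives $\Jt{\theta^{t+\deltat}} = \E_{P_{\theta^{t+\deltat}}}[\Wft] \leq \frac1q\, P_{\theta^{t+\deltat}}[f \leq m^\ast]$. The hypothesis $\Jt{\theta^{t+\deltat}} > 1$ then yields the strict mass inequality $P_{\theta^{t+\deltat}}[f \leq m^\ast] > q$.

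The final step turns this mass statement into the quantile inequality via the defining supremum $\QPttq(f) = \sup\{m : P_{\theta^{t+\deltat}}[f \leq m] \geq q \text{ and } P_{\theta^{t+\deltat}}[f < m] \leq q\}$. For any $m > m^\ast$ monotonicity gives $P_{\theta^{t+\deltat}}[f < m] \geq P_{\theta^{t+\deltat}}[f \leq m^\ast] > q$, so $m$ fails the second condition and cannot belong to the defining set; hence the supremum is at most $m^\ast$, proving $\QPttq(f) \leq \QPtq(f)$. For the strict statement the extra hypothesis $P_{\theta^{t+\deltat}}[f = m^\ast] = 0$ improves the bound to $P_{\theta^{t+\deltat}}[f < m^\ast] > q$; by continuity of measure from below there is then some $s_0 < m^\ast$ with $P_{\theta^{t+\deltat}}[f \leq s_0] > q$, and rerunning the previous argument with $s_0$ in place of $m^\ast$ confines the defining set to $(-\infty, s_0]$, so that $\QPttq(f) \leq s_0 < m^\ast$.
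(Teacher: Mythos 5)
Your proof is correct and follows essentially the same route as the paper's: bound $\Wft$ pointwise by $\frac1q \indicator{f(x) \leq \QPtq(f)}$ (using the maximality built into the supremum definition of the quantile to force $\qmt(x) > q$, hence $\Wft(x)=0$, strictly above it), integrate under $P_{\theta^{t+\deltat}}$ to get $P_{\theta^{t+\deltat}}[x: f(x) \leq \QPtq(f)] > q$, and translate this mass inequality into the quantile inequality. If anything, your treatment of the two boundary subtleties---ruling out the case $\qmt(x)=q$, and obtaining the strict inequality via continuity of measure from below---is spelled out more carefully than in the paper, which asserts the corresponding implications directly.
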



The lower bound on $\Jt{\theta^{t+\deltat}}$ is obtained as follows.
Since $\int \Wft(x) p_{\theta^t}(x) \dx = 1$ and $\Wft(x) p_{\theta^t}(x) \geq 0$ for any $x$, $\Wft(x) p_{\theta^t}(x)$ can be viewed as a probability density function. Since $\ln$ is concave, by Jensen's inequality we have
\begin{equation}
\begin{split}
\ln \Jt{\theta}
&= \ln \int \frac{p_\theta(x)}{p_{\theta^t}(x)} \Wft(x) p_{\theta^t}(x) \dx \\
&\geq \int \ln \biggl( \frac{p_\theta(x)}{p_{\theta^t}(x)} \biggr) \Wft(x) p_{\theta^t}(x) \dx \\
&= \Ht(\theta) - \Ht(\theta^t) \enspace. 
\end{split}
\label{eq:JandH}
\end{equation}
Thus, if $\Ht(\theta) > \Ht(\theta^t)$ we have $\Jt{\theta} > 1$.

Now, according to \eqref{eq:ideal-IGOML}, $\theta^{t+\deltat}$
uniquely maximizes the quantity $(1 - \deltat) \E_{P_{\theta^t}}
\bigl[ \ln p_\theta(x)\bigr] + \deltat \Ht(\theta)$. Therefore, if
$\theta^{t+\deltat}\neq \theta^t$, we have
\begin{multline*}
 (1 - \deltat) \E_{P_{\theta^t}} \bigl[ \ln p_{\theta^{t+\deltat}}(x)\bigr] + \deltat \Ht(\theta^{t+\deltat}) \\
 > (1 - \deltat) \E_{P_{\theta^t}} \bigl[ \ln p_{\theta^t}(x)\bigr] + \deltat \Ht(\theta^{t})
\end{multline*}
and rearranging we get
\begin{multline}
\Ht(\theta^{t+\deltat}) - \Ht(\theta^{t}) \\
 > \frac{1 - \deltat}{\deltat} \left(\E_{P_{\theta^t}} \bigl[ \ln p_{\theta^t}(x)\bigr]  -  \E_{P_{\theta^t}} \bigl[ \ln p_{\theta^{t+\deltat}}(x)\bigr] \right) 
\\ = \frac{1 - \deltat}{\deltat} \KL{P_{\theta^t}}{P_{\theta^{t+\deltat}}} \enspace.
\label{eq:HandKL}
\end{multline}
The right-hand side of this inequality is non-negative for
$0 < \deltat \leq 1$.

This will prove the theorem once Lemma~\ref{lem:JandQ} is proved, which
we now proceed to do.
\end{proof}

\begin{proof}[of Lemma~\ref{lem:JandQ}]
%
\renewcommand{\mmax}{m}
Hereunder, we abbreviate $\mmax$ for the
$q$-quantile value $\QPtq(f)$ of $f$ under $P_{\theta^t}$.


Let us compute the weighted preference $\Wft(x)$. Since the selection
scheme $w$ satisfies $0\leq w(u)\leq 1/q$ for all $u\in[0;1]$, we have
$0\leq \Wft(x)\leq 1/q$ for any $x$.

We claim that $f(x)>m$ implies $\Wft(x)=0$. Indeed, 
suppose that $x$ is such that $f(x)>m$. Since by definition $m$ is
the largest value such that $P_{\theta^t}[y:f(y)\geq m]\geq 1-q$, we must
have $P_{\theta^t}[y:f(y)\geq f(x)]<1-q$. Hence
$P_{\theta^t}[y:f(y)<f(x)]>q$, i.e.,
$\qmt(x)>q$. Now this implies $\Wft(x)=0$ for our choice of selection
scheme $w$.

Thus $\Wft(x)$ is at most $1/q$ and vanishes if $f(x)>m$. For any
probability distribution $P_\theta$, this implies that
\[
\Jt{\theta}=\E_{P_\theta} [\Wft(x)]\leq \frac1q\, P_{\theta}[x:f(x)\leq m]
\enspace.
\]
Therefore, 
\begin{align*}
 \Jt{\theta} > 1 
 \Longrightarrow{}& P_\theta[x: f(x) \leq \mmax] > q \\
 \Longrightarrow{}& \QPthetaq(f) \leq \mmax \enspace.
\end{align*}
If moreover $P_\theta[x: f(x) = \mmax] = 0$, we have $P_\theta[x: f(x)
\leq \mmax] = P_\theta[x: f(x) < \mmax]$ hence
\begin{align*}
 \Jt{\theta} > 1 
 \Longrightarrow{}& P_\theta[x: f(x) < \mmax] > q \\
 \Longleftrightarrow{}& P_\theta[x: f(x) \geq \mmax] < 1 - q \\
 \Longrightarrow{}& \QPthetaq(f) < \mmax \enspace.
\end{align*}

Altogether, $\Jt{\theta^{t+\deltat}} > 1$ implies quantile improvement
$\QPttq(f) \leq \QPtq(f)$. Moreover, if 
$P_{\theta^{t+\deltat}}[x: f(x) = \mmax] = 0$, we have strict quantile 
improvement $\QPttq(f) < \QPtq(f)$.
\end{proof}

This completes the proof of Theorem~\ref{thm:qimp}. 


\begin{example}
Bernoulli distributions constitute an exponential family where the
sufficient statistics $T_i(x)$ are $x_i$. The parameter $\theta$ used in
PBIL (Example~\ref{ex:pbil}) is indeed the expectation parameter. Thus,
PBIL is an instance of IGO-ML and can be viewed as a CE/ML method at the
same time. Hence, by Theorem~\ref{thm:qimp}, each
infinite-population PBIL step leads to $q$-quantile improvement 
if we employ $q$-truncation selection, which is not the same as
the exponential weights introduced in \cite{Baluja1995icml}.
\end{example}


\begin{remark}
The proof of the theorem is quantitative: the Kullback--Leibler divergence
$\KL{P_{\theta^t}}{P_{\theta^{t+\deltat}}}$ indicates how much progress
was made. More precisely (assuming for simplicity a continuous situation
with no plateaus),
while the probability under $P_{\theta^t}$ to fall into the best $q$ percent
of points for $P_{\theta^t}$ is $q$ by definition, the probability under
$P_{\thetatt}$ to fall into the best $q$ percent
of points for $P_{\theta^t}$ is at least
$q \exp \left(
\frac{1 - \deltat}{\deltat} \KL{P_{\theta^t}}{P_{\theta^{t+\deltat}}}
\right)$.
\end{remark}

\subsection{Blockwise IGO-ML}

The expectation parameter is not always the most obvious one. When it
comes to multivariate Gaussian distributions, the expectation parameter
is the mean vector and second moment, ($m$, $m m^\T + C$). Meanwhile,
the CMA-ES and the CE/ML method for continuous optimization parametrize
the mean vector and covariance matrix, hence they differ from the
IGO-ML algorithm. Moreover, sometimes different step sizes (learning
rates) are employed for each parameter, which makes the direction of
parameter update different from that of the natural gradient. Here, we justify
some of these settings by guaranteeing $q$-quantile improvement in an
extended framework.

We define an extension of IGO-ML, \emph{blockwise IGO-ML}, that
recovers the pure rank-$\mu$ CMA-ES update with different learning rates for $m$ and $C$.

\begin{definition}
Let $\theta = (\theta_1, \dots, \theta_k)$ be any decomposition of the
parameter $\theta$ into $k$ blocks, 
and let $\{\deltat_1, \dots, \deltat_k\}$ be a step size for each block. 
For $1\leq j\leq k$, define the \emph{$j$-th block partial IGO-ML update} with
step size $\deltat_j$ as the map sending a parameter value $\theta$ to
$\Phi_j(\theta)$ where
\begin{multline}
\label{eq:partial-IGOML}
\Phi_j(\theta)
\deq \argmax_{\begin{subarray}{c}
\theta^*\\
\theta^*_i=\theta_i\text{ for all }i\neq j
\end{subarray}
}
\Bigg\{ (1 - \deltat_j) \E_{P_\theta}\left[ \ln p_{\theta^*}(x) \right]   \\
+ \deltat_j \sum_i \wi \ln p_{\theta^*}(x_i) \Bigg\} \enspace.
\end{multline}
The {\em blockwise IGO-ML} updates the parameter $\theta$ as follows.
Given a current parameter value $\theta^t$, update the first block of
$\theta^t$, then the second block, etc., in that order; explicitly, set
\begin{equation}
 \theta^{t+1}\deq 
 (\Phi_k\circ\cdots\circ
 \Phi_2\circ\Phi_1)(\theta^t)
 \enspace,
\label{eq:block-IGOML}
\end{equation}
where we note that the same Monte Carlo sample $\{x_i\}$ from $P_{\theta^t}$ is used
throughout the whole range of block updates $\Phi_1,\ldots,\Phi_k$.
\end{definition}

The infinite-population step ($\lambda = \infty$) reads the same with
\begin{multline}
\Phi_j(\theta)
\deq \argmax_{\begin{subarray}{c}
\theta^*\\
\theta^*_i=\theta_i\text{ for all }i\neq j
\end{subarray}
}
\Bigg\{ (1 - \deltat_j) \E_{P_\theta}\left[ \ln p_{\theta^*}(x) \right]   \\
+ \deltat_j \,\E_{P_{\theta^t}} \bigl[\Wft(x)  \ln p_{\theta^*}(x) \bigr]
\Bigg\} \enspace.
\label{eq:ideal-block-IGOML}
\end{multline}

As before, the finite- and infinite-population blockwise IGO-ML
updates 
only make
sense if the $\argmax$ in \eqref{eq:partial-IGOML} or 
\eqref{eq:ideal-block-IGOML} is uniquely determined.

Note that the blockwise IGO-ML depends on the decomposition of the
parameters into blocks and their update order, while it is independent of
the parametrization inside each block. Blockwise IGO-ML is not
necessarily equivalent to IGO-ML even when all $\deltat_i$ are equal
to $\deltat$.

\yohe{I was looking for a property of parametrizations with which the IGO algorithm with different learning rates turns into the block IGO-ML, but I haven't found it yet... }

\begin{proposition}
\label{prop:igo-cma}
The pure rank-$\mu$ CMA-ES update (Example~\ref{ex:cmaes}) is an instance
of blockwise IGO-ML for Gaussian distributions, with parameter
decomposition $\theta = (\theta_1, \theta_2)$ where $\theta_1 = C$, the
covariance matrix, and
$\theta_2 = m$, the mean vector.
\end{proposition}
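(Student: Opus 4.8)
The plan is to compute the two partial updates $\Phi_1$ (acting on $\theta_1 = C$) and $\Phi_2$ (acting on $\theta_2 = m$) explicitly, and to check that their composition $\Phi_2 \circ \Phi_1$ reproduces the two pure rank-$\mu$ CMA-ES equations of Example~\ref{ex:cmaes} with $\eta_C = \deltat_1$ and $\eta_m = \deltat_2$. Since blockwise IGO-ML is independent of the parametrization used inside each block, I am free to carry out each constrained weighted maximum-likelihood problem in whichever form is convenient. The key observation is that freezing one of $m$ or $C$ leaves a genuine exponential subfamily of Gaussians, so each partial update is just a mixed weighted maximum-likelihood estimate and can be evaluated by matching the relevant expectation parameter to a weighted average, exactly as in Statement~\ref{stat:unique-ML} (with the finite sum replaced by a combination of a sum and an integral).

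First I would treat $\Phi_1$, applied to $\theta^t = (m^t, C^t)$, which maximizes over $C$ while $m = m^t$ is held fixed. With the mean frozen, $\{p_{(m^t, C)}\}_{C \succ 0}$ is an exponential family whose sufficient statistic is $(x - m^t)(x - m^t)^\T$ and whose expectation parameter is $C$ itself. The objective in \eqref{eq:partial-IGOML} places total weight $1 - \deltat_1$ on the distribution $P_{\theta^t}$ and weight $\deltat_1 \wi$ on each sample $x_i$, and these sum to $1$, so its unique maximizer sets $C$ to the corresponding weighted average of the sufficient statistic,
\[
C^{t+1} = (1 - \deltat_1)\,\E_{P_{\theta^t}}\!\bigl[(x - m^t)(x - m^t)^\T\bigr] + \deltat_1 \sum_i \wi\,(x_i - m^t)(x_i - m^t)^\T .
\]
Using $\E_{P_{\theta^t}}[(x - m^t)(x - m^t)^\T] = C^t$ and $\sum_i \wi = 1$, this rearranges to $C^t + \deltat_1 \sum_i \wi\bigl((x_i - m^t)(x_i - m^t)^\T - C^t\bigr)$, which is exactly the rank-$\mu$ covariance update with $\eta_C = \deltat_1$.

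Next I would treat $\Phi_2$, now applied to the intermediate parameter $(m^t, C^{t+1})$, maximizing over $m$ with $C = C^{t+1}$ frozen. With the covariance fixed, $\{p_{(m, C^{t+1})}\}$ is again an exponential family whose expectation parameter is $m$, so its weighted maximum-likelihood estimate is the weighted average of $x$. Crucially, by the definition of blockwise IGO-ML the expectation term in $\Phi_2$ is taken under the \emph{current} distribution $P_{(m^t, C^{t+1})}$, whose mean is still $m^t$ because only $C$ has changed; thus $\E[x] = m^t$ and
\[
m^{t+1} = (1 - \deltat_2)\,m^t + \deltat_2 \sum_i \wi\,x_i = m^t + \deltat_2 \sum_i \wi\,(x_i - m^t),
\]
which is the mean update with $\eta_m = \deltat_2$. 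Together with the previous paragraph this yields the full pure rank-$\mu$ CMA-ES update.

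The one genuinely delicate point---and the reason the proposition fixes the decomposition $\theta_1 = C$, $\theta_2 = m$ \emph{in that order}---is that the order of the two partial updates matters. Since the covariance update centers the samples at whatever mean is current when $\Phi_1$ runs, updating $C$ first (while the mean is still $m^t$) is what produces the $(x_i - m^t)(x_i - m^t)^\T$ terms of CMA-ES; had $m$ been updated first, the covariance update would instead center at $m^{t+1}$, and the match would fail. I therefore expect the main work to lie not in any hard estimate but in (i) verifying the two constrained maximum-likelihood computations for the Gaussian subfamilies and (ii) correctly tracking which distribution the expectation term is taken under at each stage; both become routine once the exponential-subfamily structure is made explicit.
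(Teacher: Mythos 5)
Your proof is correct and follows essentially the same route as the paper's: both update $C$ first by viewing the mean-frozen Gaussians as an exponential family whose expectation parameter (after centering at $m^t$) is $C$, so that the partial update is the weighted moment match giving the rank-$\mu$ covariance rule, and then update $m$ with the cross-entropy term taken under the intermediate distribution (covariance already updated, mean still $m^t$). The only cosmetic difference is the mean step, where the paper solves $\nabla_{\!m} = 0$ directly while you reuse the expectation-parameter moment-matching argument; both are routine and yield the same update, and your remark on why the order (first $C$, then $m$) matters is also consistent with the paper.
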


\begin{proof}
Given $\theta^t = (C^t, m^t)$, blockwise IGO-ML first updates $C$ as
follows:
\begin{multline}
 C^* = \argmax_{C} \Bigg\{ (1 - \deltat_C) \E_{P_{(C^t, m^t)}}\left[ \ln p_{(C, m^t)}(x) \right]   \\
+ \deltat_C \sum_i \wi \ln p_{(C, m^t)}(x_i) \Bigg\} \enspace.
\label{eq:c-igoml}
\end{multline}
Considering $\{ P_{(C, m^t)} \}$ as an exponential family of Gaussian distributions
whose mean vector is fixed to $m^t$, \eqref{eq:c-igoml} can be viewed as
an ordinary IGO-ML step for this restricted model. Then, since
(after shifting the origin of the coordinate system to $m^t$) 
$C$ is the expectation parameter of
the restricted model, the update is given by~\eqref{eq:igo-equiv-sample}
namely
\begin{equation*}
 C^* = C^t + \deltat_C \sum_{i=1}^{\lambda} \wi \bigl( (x_i - m^t)(x_i - m^t)^\T - C^t \bigr) \enspace.
\end{equation*}
Next, $m$ is updated as
\begin{multline*}
 m^* = \argmax_{m} \Bigg\{ (1 - \deltat_m) \E_{P_{(C^*, m^t)}}\left[ \ln p_{(C^*, m)}(x) \right]   \\
+ \deltat_m \sum_i \wi \ln p_{(C^*, m)}(x_i) \Bigg\} \enspace.
\end{multline*}
To derive $m^*$, let us differentiate the inside of $\argmax$
w.r.t.\ $m$ and derive the zero point of the derivative. Seeing that
$\nabla_{\!m} \ln p_{(C^*, m)}(x) = (C^*)^{-1} ( x - m )$, we find the condition
\begin{multline*}
 (1 - \deltat_m) (C^*)^{-1}( m^t - m^*) + \deltat_m \sum_i \wi
 (C^*)^{-1}(x_i -  m^*) = 0 \enspace,
\end{multline*}
which holds if and only if
\begin{equation*}
 m^* = m^t + \deltat_m \sum_{i=1}^{\lambda} \wi (x_i - m^t) \enspace.
\end{equation*}
This is equivalent to the pure rank-$\mu$ CMA-ES update.
\end{proof}

\yohe{%
If the order of the parameters is changed, namely $\theta_1 = m$ and $\theta_2 = C$, the blockwise IGO-ML recovers EMNA, which is known to cause premature convergence even on Sphere function.
}%

Quantile improvement as in Theorem~\ref{thm:qimp} readily extends to
this setting as follows.

\begin{theorem}
\label{thm:iter-qimp}
Let the selection scheme be $w(u) = \indicator{u \leq q} / q$ where $0 <
q < 1$. Assume that the $\argmax$ defining each partial 
infinite-population IGO-ML update \eqref{eq:ideal-block-IGOML} 
is uniquely determined. Then for $0
< \deltat_j \leq 1$ ($ j \in \llbracket1; k\rrbracket$), each
infinite-population blockwise IGO-ML step \eqref{eq:block-IGOML} leads to
$q$-quantile improvement: $Q_{P_{\theta^{t+1}}}^q(f) \leq \QPtq(f)$.

Moreover, equality can hold only if
$P_{\theta^{t+1}}=P_{\theta^t}$ or if $P_{\theta^{t+1}}[x: f(x) =
\QPtq(f)] > 0$.
\end{theorem}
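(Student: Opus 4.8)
The plan is to reduce everything to the single-block computation already carried out in the proof of Theorem~\ref{thm:qimp}, exploiting the fact that the ``weighted cross-entropy'' $\Ht(\theta) = \E_{P_{\theta^t}}[\Wft(x)\ln p_\theta(x)]$ appears, with exactly the same anchor $\theta^t$, inside every one of the $k$ partial updates \eqref{eq:ideal-block-IGOML}. Write $\theta^{(0)} \deq \theta^t$ and $\theta^{(j)} \deq \Phi_j(\theta^{(j-1)})$, so that $\theta^{t+1} = \theta^{(k)}$. Since the Jensen inequality \eqref{eq:JandH} and Lemma~\ref{lem:JandQ} depend only on $\theta^t$ and not on how the next iterate was produced, it suffices to show $\Ht(\theta^{t+1}) > \Ht(\theta^t)$ whenever $P_{\theta^{t+1}} \neq P_{\theta^t}$: then \eqref{eq:JandH} gives $\Jt{\theta^{t+1}} > 1$ and Lemma~\ref{lem:JandQ} delivers the quantile improvement together with its equality clause.

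First I would establish the per-block monotonicity $\Ht(\theta^{(j)}) \geq \Ht(\theta^{(j-1)})$. Because $\theta^{(j-1)}$ is admissible in the maximization defining $\Phi_j$ (it agrees with itself off block $j$), optimality of $\theta^{(j)}$ gives, after rearranging exactly as in \eqref{eq:HandKL} but now with anchor $\theta^{(j-1)}$ in place of $\theta^t$,
\[
\Ht(\theta^{(j)}) - \Ht(\theta^{(j-1)}) \geq \frac{1-\deltat_j}{\deltat_j}\,\KL{P_{\theta^{(j-1)}}}{P_{\theta^{(j)}}} \geq 0
\]
for $0 < \deltat_j \leq 1$. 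This is literally the single-block estimate of Theorem~\ref{thm:qimp}, reused at the intermediate point $\theta^{(j-1)}$ with its own step size $\deltat_j$.

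Then I would telescope: $\Ht(\theta^{t+1}) - \Ht(\theta^t) = \sum_{j=1}^k \bigl(\Ht(\theta^{(j)}) - \Ht(\theta^{(j-1)})\bigr) \geq 0$, so it only remains to locate a strict increase. Here the uniqueness of each $\argmax$ does the work. If $P_{\theta^{(j)}} = P_{\theta^{(j-1)}}$ then both sides of the optimality inequality coincide and $\Ht$ is unchanged at that step. Conversely, if $\Ht(\theta^{(j)}) = \Ht(\theta^{(j-1)})$, then for $\deltat_j < 1$ the displayed inequality forces $\KL{P_{\theta^{(j-1)}}}{P_{\theta^{(j)}}} = 0$, while for $\deltat_j = 1$ the objective reduces to $\Ht$ alone, so that $\theta^{(j-1)}$ would be a second maximizer; in either case $P_{\theta^{(j)}} = P_{\theta^{(j-1)}}$. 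Hence $\Ht$ stays constant over the whole sweep only if $P_{\theta^{(j)}} = P_{\theta^{(j-1)}}$ for every $j$, i.e.\ only if $P_{\theta^{t+1}} = P_{\theta^t}$. Consequently $P_{\theta^{t+1}} \neq P_{\theta^t}$ forces $\Ht(\theta^{t+1}) > \Ht(\theta^t)$, which closes the argument.

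The step I expect to be the most delicate is the bookkeeping in the last paragraph: tracking that a genuine change in the final distribution must be traceable to at least one non-trivial partial update, and that strictness there survives both the interior case $\deltat_j < 1$ and the boundary case $\deltat_j = 1$ (where the $\KL$ term vanishes and one must invoke uniqueness of the maximizer directly). The inequalities themselves are a direct transcription of the Theorem~\ref{thm:qimp} proof; the genuinely new content is only the observation that $\Ht$, anchored once and for all at $\theta^t$, serves as a common potential that is monotone non-decreasing along the sequential block sweep.
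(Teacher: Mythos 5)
Your proof is correct and takes essentially the same route as the paper's: the same decomposition into intermediate points $\theta^{t,j}$, the same use of $\Ht$ (anchored once at $\theta^t$) as a telescoping potential with the per-block optimality inequality $\Ht(\theta^{t,j})-\Ht(\theta^{t,j-1})\geq \frac{1-\deltat_j}{\deltat_j}\KL{P_{\theta^{t,j-1}}}{P_{\theta^{t,j}}}\geq 0$, and the same final reduction via \eqref{eq:JandH} and Lemma~\ref{lem:JandQ}. The only cosmetic difference is in how strictness is obtained: the paper states the per-block inequality as strict whenever $\theta^{t,j}\neq\theta^{t,j-1}$ (directly from uniqueness of the $\argmax$), whereas you start from the weak inequality and recover strictness by a contrapositive with a case split on $\deltat_j<1$ versus $\deltat_j=1$; the two arguments are equivalent.
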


Consequently, each infinite-population step of the pure rank-$\mu$
CMA-ES update guarantees $q$-quantile
improvement. Indeed, from Proposition~\ref{prop:igo-cma} this variant of
the CMA-ES is an instance of blockwise IGO-ML.
Moreover, if each level set of $f$ has zero Lebesgue measure, which often holds for continuous optimization, we have strict $q$-quantile improvement.


%
\newcommand{\Qqnext}{Q^q_{P_{\theta^{t+1}}}(f)}
\begin{proof}
If $P_{\theta^{t+1}} = P_{\theta^{t}}$, obviously
$Q^q_{P_{\theta^{t+1}}}(f) = \QPtq(f)$. We assume 
$P_{\theta^{t+1}} \neq P_{\theta^{t}}$ in the following.

Set $\theta^{t,0}\deq \theta^t$ and $\theta^{t,j}\deq
\Phi_j(\theta^{t,j-1})$ so that $\theta^{t+1}=\theta^{t,k}$.
According to Lemma~\ref{lem:JandQ}, to prove quantile improvement it is
enough to show that $\Jtwo{\theta^{t+1}}{\theta^{t}}>1$. Moreover,
this implies strict quantile improvement provided $P_{\theta^{t+1}}[x: f(x) 
= \QPtq(f)] = 0$.

According to \eqref{eq:JandH}, if $\Ht(\theta^{t+1}) > \Ht(\theta^t)$ we
have \new{that} $\Jt{\theta^{t+1}} > 1$. To show that $\Ht(\theta^{t+1}) >
\Ht(\theta^t)$ we decompose $\Ht(\theta^{t+1}) - \Ht(\theta^t)$
into the sum of partial differences, namely,
\begin{equation*}
\Ht(\theta^{t+1}) - \Ht(\theta^t) = \sum_{j=1}^{k} \Ht(\theta^{t,j}) - \Ht(\theta^{t,j-1}) \enspace, 
\end{equation*}
and we will prove that each term is non-negative. Moreover, if
$P_{\theta^{t,j}} \neq P_{\theta^{t,j-1}}$ for some
$j\in\llbracket 1;k\rrbracket$, we will have $\Ht(\theta^{t,j}) -
\Ht(\theta^{t,j-1}) > 0$ for this $j$.
Since $P_{\theta^{t+1}}
\neq P_{\theta^{t}}$ implies $P_{\theta^{t,j}} \neq P_{\theta^{t,j-1}}$ for
at least one $j\in\llbracket 1;k\rrbracket$,
we will have that $\Ht(\theta^{t+1}) - \Ht(\theta^{t}) > 0$, resulting in $\Jt{\theta^{t+1}} > 1$.

We proceed as in Theorem~\ref{thm:qimp}. Since
$\theta^{t,j}=\Phi_j(\theta^{t,j-1})$ is the only maximizer of \eqref{eq:ideal-block-IGOML}, we have
\begin{multline*}
  (1 - \deltat_j) \E_{P_{\theta^{t,j-1}}}\left[ \ln p_{\theta^{t,j}}(x) \right] 
  + \deltat_j \Ht(\theta^{t,j}) \\
 \geq (1 - \deltat_j) \E_{P_{\theta^{t,j-1}}}\left[ \ln p_{\theta^{t,j-1}}(x) \right] 
  + \deltat_j \Ht(\theta^{t,j-1}) 
\end{multline*}
with equality holding if and only if $\theta^{t,j} = \theta^{t,j-1}$.
Rearranging, we get
\begin{equation*}
\Ht(\theta^{t,j})-\Ht(\theta^{t,j-1})
> \frac{1 - \deltat_j}{ \deltat_j }
\KL{P_{\theta^{t,j-1}}}{P_{\theta^{t,j}}}
\end{equation*}
if $\theta^{t,j} \neq \theta^{t,j-1}$, and $\Ht(\theta^{t,j}) =
\Ht(\theta^{t,j-1})$ if $\theta^{t,j} = \theta^{t,j-1}$. The
right-hand side of the above inequality is non-negative for $0 < \deltat
\leq 1$. Therefore, $\Ht(\theta^{t,j})-\Ht(\theta^{t,j-1}) \geq 0$ for
all $j \in \llbracket 1; k\rrbracket$.
Moreover, since $P_{\theta^{t+1}} \neq P_{\theta^{t}}$, for at least one
$j \in \llbracket 1;
k\rrbracket$ we have $\theta^{t,j} \neq \theta^{t,j-1}$ and thus
$\Ht(\theta^{t,j})-\Ht(\theta^{t,j-1}) > 0$ for
this $j$, implying that $\Ht(\theta^{t+1})-\Ht(\theta^{t}) > 0$. 
This completes the proof.
\end{proof}


\subsection{Finite Population Sizes}

The results above are valid for ``ideal'' updates
with infinite sample size. With finite sample size, the update
\eqref{eq:IGOML} defines a stochastic sequence (depending on the random sample
$\{x_i\}$) and so one cannot expect monotone $q$-quantile improvement at
each step. Still, we can expect $q$-quantile improvement with high
probability when the population size is sufficiently large.

We provide an analogue of Theorem~\ref{thm:qimp} for finite but large
population size. A similar statement holds for blockwise IGO-ML.
The proof follows a standard probabilistic approximation
argument.

\begin{proposition}
Let $w(\cdot)$ be the $q$-truncation selection scheme: $w(u) = \indicator{u \leq q} / q$ where $0 <
q < 1$. Let $\{P_\theta\}$ be an exponential family of probability
distributions, parametrized by its expectation parameter.
Assume that the $\argmax$ defining the infinite-population IGO-ML step
\eqref{eq:ideal-IGOML} is uniquely defined.

Assume that for all $\theta\in \Theta$, the derivative $\partial \ln
P_\theta(x)/\partial \theta$ exists for $P_\theta$-almost all $x\in X$
and has finite second moment: $\E_{P_\theta} \left[\left|\partial \ln
P_\theta(x)/\partial \theta\right|^2\right]<\infty$.

Let $0<\deltat\leq 1$.
Let $\theta^{t+\deltat}_\lambda$ be the IGO-ML update \eqref{eq:IGOML}
with sample size $\lambda$, and let $\theta^{t+\deltat}_\infty$ be the
infinite-population IGO-ML update \eqref{eq:ideal-IGOML}.
Assume that $\theta^{t+\deltat}_\infty\neq \theta^t$.

Then, with probability tending to $1$ as $\lambda\to\infty$, the
finite-population update $\theta^{t+\deltat}_\lambda$ results in
$q$-quantile improvement:
\[
Q_{P_{\theta^{t+\deltat}_\lambda}}^q(f)
\leq
Q_{P_{\theta^t}}^q(f)
\enspace.
\]
\end{proposition}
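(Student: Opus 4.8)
The plan is to reduce the statement to Lemma~\ref{lem:JandQ} combined with a continuity-plus-consistency argument. By Lemma~\ref{lem:JandQ} it suffices to show that $\Jtwo{\theta^{t+\deltat}_\lambda}{\theta^t} > 1$ with probability tending to $1$. In the infinite-population limit this is already available: since we assume $\theta^{t+\deltat}_\infty \neq \theta^t$ (equivalently, for an exponential family, $P_{\theta^{t+\deltat}_\infty}\neq P_{\theta^t}$), the reasoning in the proof of Theorem~\ref{thm:qimp} — feeding \eqref{eq:HandKL} into \eqref{eq:JandH} — yields the \emph{strict} inequality $\Jtwo{\theta^{t+\deltat}_\infty}{\theta^t} > 1$. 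The whole task is then to transfer this strict inequality from $\theta^{t+\deltat}_\infty$ to $\theta^{t+\deltat}_\lambda$ for large $\lambda$, using (i) continuity of $\theta\mapsto \Jt{\theta}$ and (ii) consistency of the finite-population update.

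First I would establish continuity of $\theta\mapsto \Jt{\theta}=\E_{P_\theta}[\Wft(x)]$. The truncation weight is bounded, $0\leq \Wft\leq 1/q$, so $\Jt{\theta}\leq \tfrac1q\,P_\theta[x:f(x)\leq m]$ with $m\deq\QPtq(f)$. Since for an exponential family the densities $p_\theta$ depend continuously on $\theta$ and converge pointwise as $\theta_n\to\theta$, Scheff\'e's lemma gives $\int\abs{p_{\theta_n}-p_\theta}\,\RM\to 0$, whence $\abs{\Jt{\theta_n}-\Jt{\theta}}\leq \tfrac1q\int\abs{p_{\theta_n}-p_\theta}\,\RM\to 0$. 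Thus $\Jt{\cdot}$ is continuous, and because $\Jtwo{\theta^{t+\deltat}_\infty}{\theta^t}>1$ \emph{strictly}, there is an open neighborhood $U$ of $\theta^{t+\deltat}_\infty$ on which $\Jt{\theta}>1$ for every $\theta\in U$.

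Next I would invoke consistency of the update. For an exponential family in expectation parameters the IGO-ML step has the explicit linear form \eqref{eq:igo-exp-sample}, $\theta^{t+\deltat}_\lambda=\theta^t+\deltat\sum_i\wi(T(x_i)-\theta^t)$, with infinite-population counterpart \eqref{eq:igo-exp}. The finite second-moment hypothesis on $\partial \ln p_\theta/\partial\theta$ (equivalently on $T$) is exactly what makes $\Gt$ a consistent estimator of the natural gradient (Theorem~4 in \cite{Arnold2011arxiv}), so $\theta^{t+\deltat}_\lambda\to\theta^{t+\deltat}_\infty$ in probability as $\lambda\to\infty$. Consequently $P[\theta^{t+\deltat}_\lambda\in U]\to 1$, and on the event $\{\theta^{t+\deltat}_\lambda\in U\}$ we have $\Jtwo{\theta^{t+\deltat}_\lambda}{\theta^t}>1$; Lemma~\ref{lem:JandQ} then gives $Q^q_{P_{\theta^{t+\deltat}_\lambda}}(f)\leq \QPtq(f)$. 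Hence quantile improvement holds with probability tending to $1$.

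I expect the main obstacle to be the consistency step and its interaction with the finite-sample weights $\wi$: one must be certain that the weighted empirical average $\sum_i\wi T(x_i)$ genuinely converges to $\E_{P_{\theta^t}}[\Wft(x)T(x)]$, which requires both the convergence $\lambda\wi\to\Wft(x_i)$ for the truncation scheme and a law-of-large-numbers control on the tails of $T$. This is precisely where the finite second-moment hypothesis enters, and it is why the conclusion is only ``with probability tending to $1$'' rather than deterministic. By contrast, the continuity and open-neighborhood arguments are routine once the boundedness $\Wft\leq 1/q$ is exploited through Scheff\'e's lemma.
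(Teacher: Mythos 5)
Your proposal is correct and follows essentially the same route as the paper's own proof: reduce to $\Jtwo{\theta^{t+\deltat}_\lambda}{\theta^t}>1$ via Lemma~\ref{lem:JandQ}, get the strict inequality $\Jtwo{\theta^{t+\deltat}_\infty}{\theta^t}>1$ from the argument of Theorem~\ref{thm:qimp} using $\theta^{t+\deltat}_\infty\neq\theta^t$, and transfer it to $\theta^{t+\deltat}_\lambda$ by consistency of the finite-population update together with convergence of $\Jt{\cdot}$ along the updates. Your explicit use of Scheff\'e's lemma is in fact a welcome sharpening of the paper's terser step (which passes from pointwise convergence of $p_{\thetatt_\lambda}$ to convergence of the bounded-weight integral without naming the tool), but it is the same argument, not a different one.
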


Consequently, the same holds for the CE/ML method and the IGO
algorithm when they are applied to an exponential family using the
expectation parameters.

Note the assumption that the \emph{ideal} dynamics has not reached
equilibrium yet: $\theta^{t+\deltat}_\infty\neq \theta^t$. If
$\theta^{t+\deltat}_\infty=\theta^t$, the finite-population dynamics will
just randomly wander around this equilibrium value with some noise,
resulting in either improvement or deterioration at each step.

Also note that the population size $\lambda$ needed may depend on the
current location $\theta^t$ in parameter space, as well as the objective
function $f$. For instance, highly oscillating functions $f$ likely
require higher population sizes for a consistent estimation of the IGO-ML
update.


\begin{proof}
For exponential families, the IGO and IGO-ML updates coincide.
Under the conditions of the theorem, the finite-population IGO update \eqref{eq:Gt} is a consistent 
estimator of the infinite-population IGO update \eqref{eq:ng} 
\cite[Proposition~18]{Arnold2011arxiv}, implying that
$\thetatt_\lambda$
converges with probability one to $\thetatt_\infty$.
Under our regularity assumptions on $P_\theta$, this implies pointwise convergence of 
$p_{\thetatt_\lambda}$ to $p_{\thetatt_\infty}$, which, since $\Wft(x)$
is bounded, leads to
\begin{multline*}
\Jt{\thetatt_\lambda} = \E_{P_{\thetatt_\lambda}}[ \Wft(x) ] \\
\to 
\E_{P_{\thetatt_\infty}}[ \Wft(x) ] = \Jt{\thetatt_\infty}
\quad \text{as }\lambda \to \infty.
\end{multline*}
Now the right-hand side is greater than $1$ for $0 < \deltat \leq 1$ unless 
$\theta^{t+\deltat}_\infty = \theta^t$, as we have shown in the proof of 
Theorem~\ref{thm:qimp}. Thus, we have $\Jt{\thetatt_\lambda} > 1$ with high 
probability for sufficiently large $\lambda$. Thus
Lemma~\ref{lem:JandQ} entails $q$-quantile improvement with high 
probability.
\end{proof}

\section{Fitness-Proportional Selection}
\label{sec:fit}

These results carry over to the use of a composite $g \circ f$ of a
function $g$ with the objective function $f$, as a
selection weight instead of
$\Wft$ in the IGO framework. This covers, for instance, fitness-proportional
selection ($g=\mathrm{Id}$).
%
We prove that, when considering the
natural gradient ascent for an exponential family \eqref{eq:exponential}
using the expectation parameter \eqref{eq:ep}, we can guarantee monotone
$\E_{P_\theta}[g \circ f(x)]$-value improvement for 
updates of step size inversely proportional to $\E_{P_\theta}[g \circ f(x)]$. More precisely,
\begin{theorem}
\label{thm:fitness-imp}
Assume $g \circ f$ is non-negative and not \nnew{almost} everywhere $0$. Consider
the update
\begin{equation}
 \theta^{t+\deltat} = \theta^t + \deltat\, \E_{P_{\theta^t}}\left[ \frac{g \circ f(x)}{\E_{P_{\theta^t}}[g \circ f(x)]} \left( T(x) - \theta^t \right) \right] \enspace, 
\label{eq:fit-igo}
\end{equation}
where $\theta = \ep$ is the expectation parameter of the exponential
family $\{P_\theta\}$.

Then for $0 < \deltat \leq 1$, we have
\[
\E_{P_{\theta^{t+\deltat}}}[g \circ f(x)] \geq \E_{P_{\theta^t}}[g \circ f(x)]
\enspace .
\]
Moreover, equality
can occur
only if $P_{\theta^{t+\deltat}} = P_{\theta^{t}}$.
\end{theorem}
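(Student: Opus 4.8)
The plan is to mirror the proof of Theorem~\ref{thm:qimp}, replacing the $q$-truncation preference $\Wft$ by the normalized fitness weight $W(x)\deq g\circ f(x)\big/\E_{P_{\theta^t}}[g\circ f(x)]$. First I would observe that the hypotheses ($g\circ f\geq 0$ and not almost everywhere $0$) force $\E_{P_{\theta^t}}[g\circ f(x)]>0$, so $W$ is well defined, non-negative, and satisfies $\E_{P_{\theta^t}}[W(x)]=1$. Setting $J(\theta)\deq\E_{P_\theta}[W(x)]=\E_{P_\theta}[g\circ f(x)]\big/\E_{P_{\theta^t}}[g\circ f(x)]$, the desired conclusion $\E_{P_{\thetatt}}[g\circ f]\geq\E_{P_{\theta^t}}[g\circ f]$ is literally $J(\thetatt)\geq 1$, so it suffices to prove $J(\thetatt)\geq 1$, strictly unless $P_{\thetatt}=P_{\theta^t}$.

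The key step is to recognize the update \eqref{eq:fit-igo} as an IGO-ML step for the weight $W$. I would introduce $\widetilde H(\theta)\deq\E_{P_{\theta^t}}[W(x)\ln p_\theta(x)]$ and consider the maximizer of $(1-\deltat)\E_{P_{\theta^t}}[\ln p_\theta(x)]+\deltat\,\widetilde H(\theta)=\E_{P_{\theta^t}}\big[((1-\deltat)+\deltat\,W(x))\ln p_\theta(x)\big]$. Because $0<\deltat\leq 1$ and $W\geq 0$, the effective weights $(1-\deltat)+\deltat\,W(x)$ are non-negative and, using $\E_{P_{\theta^t}}[W]=1$, integrate to $1$ under $P_{\theta^t}$. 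By Statement~\ref{stat:unique-ML} (integral version) the unique maximizer is the parameter whose expectation value equals $(1-\deltat)\E_{P_{\theta^t}}[T(x)]+\deltat\,\E_{P_{\theta^t}}[W(x)T(x)]$; writing $\theta=\ep$ so that $\E_{P_{\theta^t}}[T(x)]=\theta^t$, and regrouping with $\E_{P_{\theta^t}}[W(x)(T(x)-\theta^t)]=\E_{P_{\theta^t}}[W(x)T(x)]-\theta^t$, this is exactly the right-hand side of \eqref{eq:fit-igo}. Hence $\thetatt$ is precisely the unique IGO-ML maximizer associated with $\widetilde H$.

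From here the argument copies Theorem~\ref{thm:qimp}. Jensen's inequality applied to the probability density $W(x)\,p_{\theta^t}(x)$ gives $\ln J(\theta)\geq\widetilde H(\theta)-\widetilde H(\theta^t)$, so it is enough to establish $\widetilde H(\thetatt)\geq\widetilde H(\theta^t)$. Using that $\thetatt$ \emph{strictly} maximizes $(1-\deltat)\E_{P_{\theta^t}}[\ln p_\theta(x)]+\deltat\,\widetilde H(\theta)$ whenever $\thetatt\neq\theta^t$, the same rearrangement as in \eqref{eq:HandKL} yields $\widetilde H(\thetatt)-\widetilde H(\theta^t)>\frac{1-\deltat}{\deltat}\KL{P_{\theta^t}}{P_{\thetatt}}\geq 0$ for $0<\deltat\leq 1$. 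Therefore $J(\thetatt)>1$ whenever $\thetatt\neq\theta^t$, i.e. strict fitness improvement, while $\thetatt=\theta^t$ gives $P_{\thetatt}=P_{\theta^t}$ and equality; since for an exponential family equal expectation parameters mean equal distributions, this is exactly the stated equality condition.

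I do not anticipate a genuine obstacle. The only points requiring care are verifying that $g\circ f$ being non-negative and not almost-everywhere $0$ secures a strictly positive normalizing constant, and that $\deltat\leq 1$ keeps the effective weights $(1-\deltat)+\deltat\,W$ non-negative so that Statement~\ref{stat:unique-ML} applies. Note that, unlike Theorem~\ref{thm:qimp}, no analogue of Lemma~\ref{lem:JandQ} is needed here, since $J(\thetatt)>1$ is itself the target improvement rather than a stepping stone toward a quantile statement.
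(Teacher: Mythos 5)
Your proof is correct and takes essentially the same route as the paper: recognize the update \eqref{eq:fit-igo} as an IGO-ML step with the normalized weight $g \circ f/\E_{P_{\theta^t}}[g \circ f(x)]$ in place of $\Wft$, apply Jensen's inequality to the weighted density, and use the unique-maximizer property to bound the weighted cross-entropy difference below by $\frac{1-\deltat}{\deltat}\KL{P_{\theta^t}}{P_{\thetatt}} \geq 0$. The only difference is that you justify, via Statement~\ref{stat:unique-ML} and the non-negativity of the effective weights $(1-\deltat)+\deltat\,W(x)$, why \eqref{eq:fit-igo} coincides with the $\argmax$ characterization --- a step the paper asserts without detail by saying \eqref{eq:igo-equiv} ``still holds.''
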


Gradient based methods with fitness-proportional selection are often
employed, especially in reinforcement learning, e.g.\, \emph{policy
gradient with parameter based exploration} (PGPE) \cite{Sehnke2010nn}.
One disadvantage of gradient based methods is that the step
size has to be calibrated by the user depending on the problem at hand. 
Alternative methods such as \emph{expectation-maximization} 
\cite{Dayan1997nc}, including the RPP below, are sometimes employed to avoid this issue.
Theorem~\ref{thm:fitness-imp}, however, ensures that each natural
gradient step improves the expected fitness for $0 < \deltat \leq 1$ when
an exponential family is used with its expectation parameters.


\begin{example}
The Relative Payoff Procedure (RPP) \cite{Hinton1989ai} is a reinforcement
learning algorithm, also known as expectation-maximization (EM) algorithm
for reinforcement learning \cite{Dayan1997nc}. The RPP expresses a policy on the action space $\X = \{0, 1\}^d$ by a Bernoulli distribution $P_\theta(x)$ parametrized by the expectation parameter. The objective to be maximized is the expectation $\E_{P_\theta}[r(x)]$ of non-negative reward $r(x)$ after taking action $x \in \X$. The RPP updates the parameters to
\begin{equation*}
 \theta^{t+1} = \frac{\E_{P_{\theta^t}}[x r(x) ]}{\E_{P_{\theta^t}}[r(x)]} \enspace.
\end{equation*}
Remember the sufficient statistics $T(x)$ for Bernoulli distributions are
$T_i(x) = x_i$. Thus the RPP is equivalent to \eqref{eq:fit-igo} with $g \circ
f(x) = r(x)$ and $\deltat = 1$ and can be viewed as a natural gradient
ascent with large step.

The RPP is known from \cite{Dayan1997nc} to monotonically improve
expected reward, thanks to its
expectation-maximization interpretation.
Theorem~\ref{thm:fitness-imp} can be thought of as an extension of this
result, and also shows monotone 
improvement for the smoothed RPP, where a step size $0 < \deltat \leq 1$ is
introduced.
\end{example}

\begin{proof}
Most of the proof of Theorem~\ref{thm:qimp} carries over. Replacing
$\Wft$ in \eqref{eq:igo-equiv} with $g \circ f / \E_{P_{\theta^t}}[g \circ f(x)]$, \eqref{eq:igo-equiv} still holds and we have
\begin{equation}
\begin{split}
  \theta^{t+\deltat} 
  &= \theta^t + \deltat \,\E_{P_{\theta^t}} \left[\frac{g \circ f(x)}{\E_{P_{\theta^t}}[g \circ f(x)]} \left( T(x) - \theta^t \right) \right] \\
  &= \argmax_\theta \bigg\{ (1 - \deltat) \E_{P_{\theta^t}} \left[ \ln p_\theta(x)\right] \\
 &\quad \qquad + \deltat \E_{P_{\theta^t}} \left[ \frac{g \circ f(x)}{\E_{P_{\theta^t}}[g \circ f(x)]} \ln p_\theta(x)\right] \bigg\}
\end{split}
\label{eq:f-igo-equiv}
\end{equation}

Thanks to Jensen's inequality, we have the counterpart of \eqref{eq:JandH} as
\begin{multline}
 \ln \E_{P_\theta}[g \circ f(x)] - \ln \E_{P_{\theta^t}}[g \circ f(x)] \\
 \geq \frac{\E_{P_{\theta^t}} \left[ g \circ f(x) \ln p_\theta(x)\right]}
 {\E_{P_{\theta^t}}[g \circ f(x)]}  
 -  \frac{\E_{P_{\theta^t}} \left[ g \circ f(x) \ln p_{\theta^t}(x)\right]}{\E_{P_{\theta^t}}[g \circ f(x)]} \enspace.
\label{eq:f-JandH}
\end{multline}

Because of the second equality of \eqref{eq:f-igo-equiv}, we have the counterpart of \eqref{eq:HandKL} as
\begin{multline*}
 \frac{\E_{P_{\theta^t}} \left[ g \circ f(x) \ln p_{\theta^{t+\deltat}}(x)\right]}
 {\E_{P_{\theta^t}}[g \circ f(x)]}  
 -  \frac{\E_{P_{\theta^t}} \left[ g \circ f(x) \ln p_{\theta^t}(x)\right]}{\E_{P_{\theta^t}}[g \circ f(x)]} \\
\geq 
\frac{1 - \deltat}{\deltat} \KL{P_{\theta^t}}{P_{\theta^{t+\deltat}}},
\end{multline*}
and moreover, since the maximizer in \eqref{eq:f-igo-equiv} is
unique, the inequality is strict unless $\theta^t=\theta^{t+\deltat}$.
Hence, since the right-hand side is
non-negative, by \eqref{eq:f-JandH} we have $\ln \E_{P_\theta}[g \circ f(x)] \geq \ln \E_{P_{\theta^t}}[g \circ f(x)]$ with equality only if $P_{\theta^t} = P_{\theta^{t+\deltat}}$. This completes the proof.
\end{proof}

\begin{remark}
As mentioned in Remark~\ref{rem:malago}, Malag{\`o}~et~al.\
\cite{Malago2011foga} propose the natural gradient algorithm for discrete
optimization using exponential distributions. However, as they parametrize
the exponential distributions by the natural parameters $\theta = \np$,
Theorem~\ref{thm:fitness-imp} does not guarantee expected fitness
improvement for their algorithm\new{s}, whereas it does so for
the algorithm \eqref{eq:malago-exp} using the expectation parameters.
\end{remark}

\section{Further Discussion}
\label{sec:disc}

\yann{Rename this to "conclusion"? I like "Further discussion" better,
but apparently one of the referees interprets it as "Further research"...}
\yohe{I feel ``Further discussion'' is better than ``Conclusion'' even though one of the referees misunderstood the section name. Because this section is really further discussion.}

These results contribute to bringing theory closer to practice, by
waiving the need for infinitesimal step sizes in gradient ascent. Still,
they cover only the ``ideal'' 
situation with infinite population size, as well as finite but very large
population sizes (by a standard probabilistic approximation argument).
Finite population sizes lead to stochastic behavior and so
monotone objective improvement at each step occurs only with high
probability.

In practice, population sizes used can be quite small, $\lambda\leq 10$,
with medium to small step sizes \cite{Hansen2003ec,Baluja1995icml}.
It has been shown in 
\cite[Remark~2]{Akimoto2012ppsn}
that when population size does not tend to infinity, 
the expectation of the natural gradient estimate \eqref{eq:Gt}
 is the natural gradient \eqref{eq:ng} with a \emph{different} selection 
scheme $w$. 
So using the truncation weight
$w(u)=\indicator{u\geq q}$ with a small population size and very small
step sizes will 
result, by
the machinery of stochastic approximation
\cite{Kushner2003book,Borkar2008book},
in simulating an infinite-population
IGO step with another selection scheme, a situation outside the scope of
this article.  Our results, on the contrary, suggest using larger
populations and larger step sizes instead.

%

Finally, let us stress that objective improvement is not, by itself, a
sufficient guarantee that optimization performs well: in situations of
premature convergence, the objective still improves at each step. Premature
convergence can occur for large values of the learning rate in some
instantiations of IGO and IGO-ML (see the study in \cite{Arnold2011arxiv}); our
results say nothing about this phenomenon.


%
%


%
%

%

\end{document}